\documentclass{article}
\usepackage{ijcai26}

\usepackage{times}
\usepackage{soul}
\usepackage{url}
\usepackage[hidelinks]{hyperref}
\usepackage[utf8]{inputenc}
\usepackage[small]{caption}
\usepackage{graphicx}
\usepackage{amsmath}
\usepackage{amsthm}
\usepackage{adjustbox}
\usepackage{amssymb} 
\usepackage{booktabs}
\usepackage{algorithm}
\usepackage{algorithmic}
\usepackage{subcaption}
\usepackage{multirow} 
\usepackage[switch]{lineno}
\usepackage{pifont}

\newtheorem{theorem}{Theorem}

\newcommand{\best}[1]{\textbf{#1}}

\title{SPAMoE: Spectrum-Aware Hybrid Operator Framework for Full-Waveform Inversion}

\author{
Zhenyu Wang$^{1*}$
\and
Peiyuan Li$^{1*}$
\and
Yongxiang Shi$^{1*}$
\And
Ruoyu Wu$^2$
\and
Chenfei Liao$^3$
\and
Lei Zhang$^{1\dagger}$\\
\affiliations
$^1$China University of Mining and Technology - Beijing\\
$^2$City University of Hong Kong (Dongguan)\\
$^3$The Hong Kong University of Science and Technology (Guangzhou)\\
\emails
2310640139@student.cumtb.edu.cn,
18991960125@163.com,
2310730419@student.cumtb.edu.cn,
72520100@cityu-dg.edu.cn,
cliao127@connect.hkust-gz.edu.cn,
zhanglei@cumtb.edu.cn\\
$^*$Equal contribution. $^\dagger$Corresponding author.
}

\begin{document}

\maketitle

\begin{abstract}
Full-waveform inversion (FWI) is pivotal for reconstructing high-resolution subsurface velocity models but remains computationally intensive and ill-posed. While deep learning approaches promise efficiency, existing Convolutional Neural Networks (CNNs) and single-paradigm Neural Operators (NOs) struggle with one fundamental issue: frequency entanglement of multi-scale geological features. To address this challenge, we propose \textbf{Spectral-Preserving Adaptive MoE (SPAMoE)}, a novel spectrum-aware framework for solving inverse problems with complex multi-scale structures. Our approach introduces a Spectral-Preserving DINO Encoder that enforces a lower bound on the high-to-low frequency energy ratio of the encoded representation, mitigating high-frequency collapse and stabilizing subsequent frequency-domain modeling. Furthermore, we design a novel Spectral Decomposition and Routing mechanism that dynamically assigns frequency bands to a Mixture-of-Experts (MoE) ensemble comprising FNO, MNO, and LNO. On the ten OpenFWI sub-datasets, experiments show that SPAMoE reduces the average MAE by \textbf{44.4\%} relative to the best officially reported OpenFWI baseline, thereby establishing a new architectural framework for learning-based full-waveform inversion. Our code and data are available at \url{https://github.com/zhenyuwang12366/SPAMoE}
\end{abstract}

\section{Introduction}

\begin{figure}[t]
    \centering
    \includegraphics[width=0.9\linewidth]{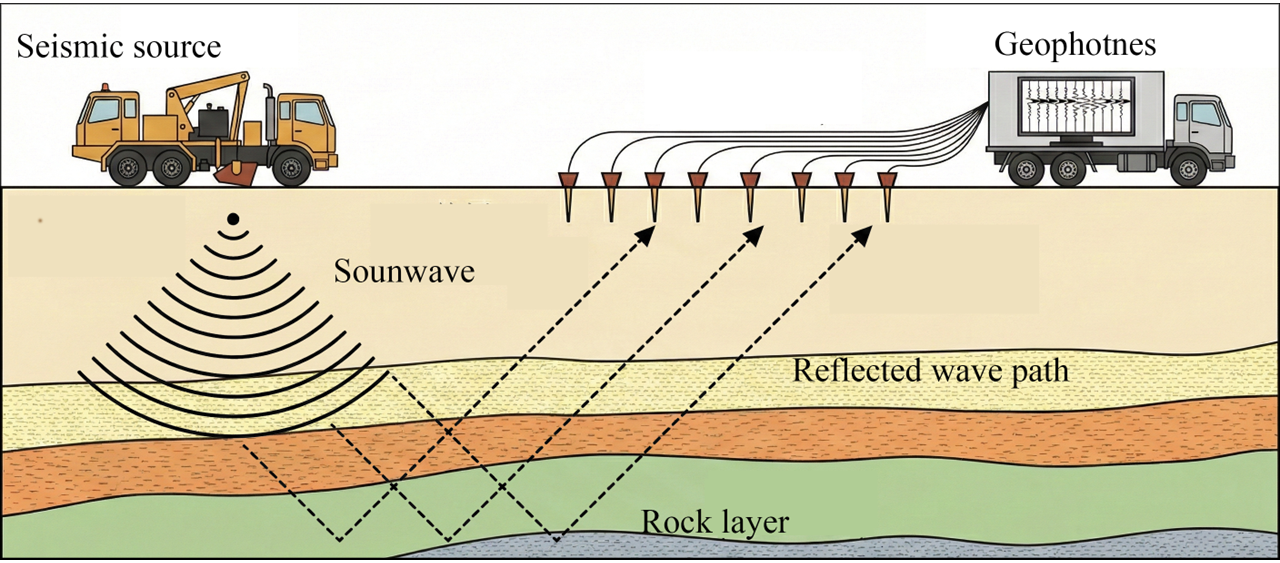}
    \caption{
    \textbf{Schematic illustration of seismic full-waveform Inversion (FWI).}
    }
    \label{fig:intro}
\end{figure}

As shown in Figure~\ref{fig:intro}, seismic full-waveform inversion (FWI) has become an important technique in modern geophysics~\cite{virieux2009overview}. 
FWI is essentially a highly nonlinear and ill-posed inverse problem, aiming to recover high-resolution subsurface physical parameter fields, such as velocity models, from observed seismic wavefields. 
As exploration targets become increasingly complex and the demand for imaging accuracy continues to rise, FWI exhibits stronger capability than traditional velocity analysis and traveltime tomography in characterizing complex geological structures. 
However, conventional physics-constrained iterative inversion methods have long been limited by cycle-skipping, high computational cost, and strong sensitivity to the initial model, which is particularly pronounced in high-resolution and structurally complex scenarios~\cite{geng2018frequency}. 
These challenges have motivated growing interest in data-driven alternative modeling paradigms. 

In recent years, advances in deep learning for scientific computing have propelled research on data-driven FWI~\cite{wu2019inversionnet,zhang2019velocitygan}, aiming to achieve a more practical trade-off between computational efficiency and optimization stability. 
Among these approaches, neural operators (NOs)~\cite{kovachki2023neural} learn mappings between PDE solutions in function spaces and exhibit resolution-invariant modeling capability, providing a promising path to rapidly approximate wave-equation-related operators and build end-to-end inversion models. 
From a spectral perspective, the multi-scale information in FWI shows clear frequency dependence: low-frequency components mainly constrain large-scale background velocities and macroscopic structures, providing a more stable global trend for inversion; whereas high-frequency components are more sensitive to fine geological features such as faults, thin layers, and sharp interfaces, largely determining the resolution and detail quality of the final imaging. 
Nevertheless, existing neural-operator-based FWI methods typically process information from different frequency bands through a single pathway, which makes information from different frequency bands prone to becoming entangled and interfering with each other during learning in multi-scale scenarios. 
As a result, the recovery of fine geological details is limited, and it becomes difficult to achieve simultaneously strong performance on both global backgrounds and local details. 
Therefore, how to explicitly decouple and specifically handle inversion information from different frequency bands within a learning framework remains a key yet insufficiently addressed problem. 

To tackle the coupling of multi-scale components in the velocity model---from smooth backgrounds to sharp faults---in the frequency domain, and to enable effective modeling after frequency decoupling, we propose \textbf{SPAMoE} (Spectral-Preserving Adaptive MoE), a spectrum-aware framework for learning-based FWI. 
\textbf{First}, SPAMoE employs a Spectral-Preserving DINO~\cite{simeoni2025dinov3} Encoder. Beyond aligning waveform observations to a structure-consistent latent representation, this encoder enforces a lower bound on the prediction’s high-to-low frequency energy ratio (HL). By mitigating high-frequency collapse and maintaining balanced frequency content, it provides a reliable foundation for subsequent frequency-domain modeling in the MoE module. 
\textbf{Second}, SPAMoE introduces an Adaptive Spectral Mixture-of-Experts consisting of three components: Concentric Soft Frequency-Band Decomposition, an Adaptive Frequency-Preference Mechanism, and a Spectral Energy Attention Router. 
Together, these components establish a complete data flow: frequency decoupling via soft-band decomposition, adaptive band allocation guided by frequency preference, and dynamic activation of complementary experts based on global spectral-energy patterns. 
Through these designs, SPAMoE organically connects ``alignment--decoupling--modeling--learning'' within a unified framework, providing a more robust pathway for high-resolution inversion in complex geological settings.

We conduct systematic evaluations of SPAMoE on the OpenFWI~\cite{deng2022openfwi} benchmark. 
Experimental results show that SPAMoE yields substantial improvements over single neural operators and multiple mainstream learning-based inversion baselines on FWI, with particularly stable recovery of complex structures and high-frequency details (e.g., faults and sharp interfaces). 
Moreover, SPAMoE also demonstrates strong performance on the pipe flows task (see the supplementary material~\ref{supply:pipe}), suggesting that the proposed ``spectral decoupling--expert specialization--adaptive routing'' modeling strategy has certain generality and future potential.

The key contributions of this work are threefold:
\begin{itemize}
    \item We propose a Spectrum-Aware Hybrid Neural Operator framework (SPAMoE). This framework explicitly decouples high and low-frequency information flows, effectively alleviating the frequency coupling inherent in traditional end-to-end models.
    \item We design two core modules: a Spectral-Preserving DINO Encoder to maintain balanced frequency content, and an Adaptive Spectral Mixture-of-Experts that performs frequency decomposition, routing, and operator modeling to improve multi-scale geological structure reconstruction.
    \item We evaluate SPAMoE on all ten official OpenFWI sub-datasets and show that it consistently outperforms the official OpenFWI baselines, reducing the averaged MAE by 44.4\% relative to the strongest baseline.
\end{itemize}


\section{Related Work}
\label{sec:relatedwork}
\subsection{Encoder-Neural Operator Architectures}
For complex PDE tasks, single neural operators are often insufficient. Consequently, researchers have designed encoder-neural operator architectures that integrate feature extraction capabilities: for instance, U-NO~\cite{rahman2022u} and U-FNO~\cite{wen2022ufno} introduce convolutional encoders to enhance multi-scale feature aggregation; MINO~\cite{shi2025mesh} leverages Transformers to capture irregular mesh geometry; while VANO~\cite{seidman2023variational} and DA-NO~\cite{viknesh2025differentiable} further explore variational modeling in the latent space. Although these architectures demonstrate potential in handling complex PDEs, FWI remains particularly challenging due to its multi-scale nature, which leads to complex spectral content and strong cross-band coupling. In practice, generic encoders may not explicitly preserve such spectral balance. 
We introduce a Spectral-Preserving DINO Encoder that enforces a lower bound on the high-to-low frequency energy ratio of the encoded representation, helping prevent high-frequency collapse during encoding and yielding a more spectrum-faithful latent space for subsequent operator learning.

\subsection{Neural Operators in FWI and MoE in PDE}
Neural operators have been applied in seismic wavefield simulation~\cite{yang2023rapid,zou2025ambient} and direct inversion~\cite{zhu2023fourier}. However, existing methods rely on a single operator pathway, leading to frequency coupling during optimization. In deep learning-based PDE solving, MoE has been employed in Physics-Informed Neural Networks~\cite{bischof2022mixture}, soft domain decomposition~\cite{hu2023augmented}, DeepONet ensembles~\cite{sharma2024ensemble}, and boundary condition learning with model selection~\cite{deighan2025mnoe}. Nevertheless, most existing MoE methods for PDEs are based on spatial domain decomposition. For FWI, the core challenge lies in the complexity of the spectral dimension~\cite{virieux2009overview}, rendering these direct spatial decomposition methods unsuitable for FWI. Meanwhile, the excellent performance of FreqMoE~\cite{chen2025freqmoe} demonstrates the superiority of routing using spectral features. However, it relies solely on spectral features and lacks perception of the global spectrum. The Adaptive Spectral MoE framework proposed in this paper explicitly decouples high- and low-frequency information flows and utilizes an attention mechanism to dynamically activate complementary operator experts. This fills the gap in dynamic routing for multi-path complementary neural operators based on global perception of spectral energy.

\section{Methodology}
\label{sec:method}

\subsection{Overall Framework}
\label{subsec:overview}

\begin{figure*}[t]
    \centering
    \includegraphics[width=\linewidth]{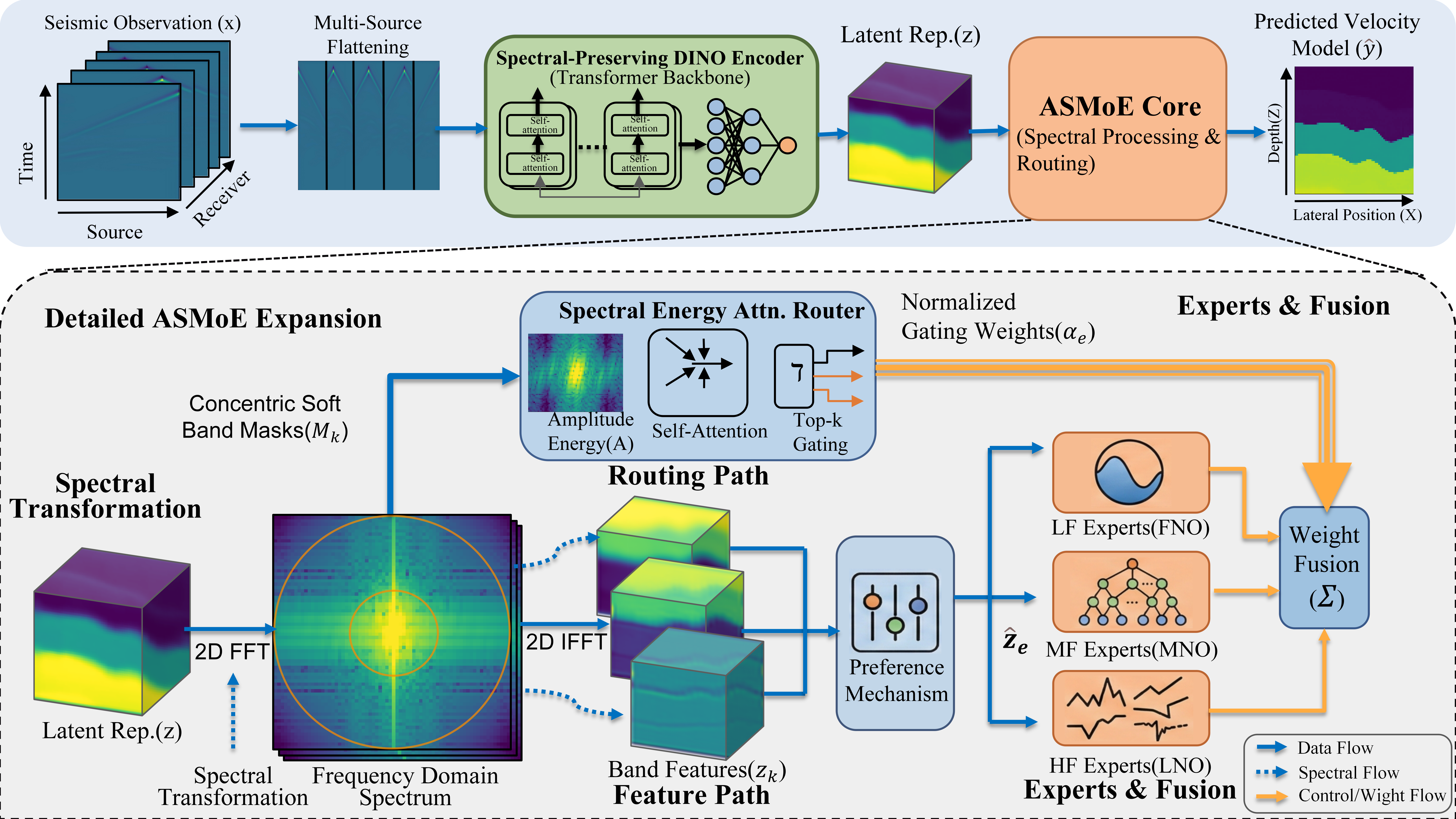}
    \caption{Overview of the SPAMoE framework. (a) The Spectral-Preserving DINO Encoder projects time-receiver domain seismic observations into spatially aligned latent representations; (b) Adaptive Spectral MoE uses the spectral energy distribution as routing features to activate experts, decomposes the full-spectrum representation into multiple concentric soft frequency bands, and adaptively fuses the band-wise features as inputs to the experts.}
    \label{fig:framework}
\end{figure*}

As illustrated in Figure~\ref{fig:framework}, we propose the \textbf{Spectral-Preserving Adaptive MoE} (SPAMoE) framework. The framework consists of two synergistic core modules: (Section~\ref{subsec:encoder}) a Spectral-Preserving DINO Encoder, and (Section~\ref{subsec:afmoe}) an Adaptive Spectral Mixture-of-Experts model, including spectral decomposition and routing mechanisms.

The overall workflow of SPAMoE is as follows: First, the Spectral-Preserving DINO Encoder maps the observations $x$ in the time-receiver domain to a spatially aligned latent representation $z$. Then, the Adaptive Spectral MoE performs differentiable spectral decomposition and routing decisions on $z$ in the frequency domain, sparsely activates a set of complementary expert operators, and finally produces the predicted velocity model $\hat{y}$.

\subsection{Preliminaries}
\label{subsec:preliminaries}
\noindent\textbf{Problem Definition of Full-Waveform Inversion.}  FWI can be formulated as an ill-posed inverse scattering problem, whose goal is to reconstruct the subsurface velocity model from seismic wavefields observed at the surface. Given observations
$x \in \mathbb{R}^{N_s \times T \times N_r}$,
where $N_s$, $T$, and $N_r$ denote the number of sources, the number of temporal samples, and the number of receivers, respectively, our goal is to reconstruct the velocity model in the spatial domain
$y \in \mathbb{R}^{H \times W}$.

To this end, we learn a nonlinear mapping $\Phi_\theta: x \rightarrow y$ by minimizing the reconstruction error:
\begin{equation}
    \theta^* = \operatorname*{argmin}_\theta
    \mathbb{E}_{(x, y)}
    \left[
        \mathcal{L}(\Phi_\theta(x), y)
    \right],
\end{equation}
where $\mathcal{L}$ denotes the objective function measuring the discrepancy between the predicted and ground-truth velocity models.

\noindent\textbf{Frequency-Domain Analysis and Energy Metrics.} To analyze spectral characteristics of physical fields, we consider the centered 2D discrete Fourier transform
$\widehat{u} = \mathcal{U}(u) \in \mathbb{C}^{H \times W}$
(see Appendix~\ref{app:freq_def}1 for details). 
We partition the frequency domain according to the normalized radial frequency $r(\omega)\in[0,1]$ (see Appendix~\ref{app:grid_def}2 for details).
Let $\Omega_L$ and $\Omega_H$ denote the low- and high-frequency sets, respectively:
\begin{equation}
    \Omega_L = \{\omega \mid r(\omega) < r_{split}\},
    \Omega_H = \{\omega \mid r(\omega) \ge r_{split}\}.
\end{equation}
To quantify spectral deviation, we define the low-/high-frequency spectral energies $E_L(u)$ and $E_H(u)$, as well as the high-to-low frequency energy ratio $\mathrm{HL}(u)$ (see Appendix~\ref{app:energy_def}3 for details). This metric serves as a core indicator in our subsequent theoretical analysis.
\subsection{Spectral-Preserving DINO Encoder}
\label{subsec:encoder}

To cope with the spectral complexity of FWI, we adopt a Spectral-Preserving DINO Encoder. This encoder not only aligns the waveform observations with the spatial velocity representation, but also establishes a lower bound on the high-to-low frequency energy ratio (HL) of the encoder output, helping keep the frequency content balanced and providing a reliable foundation for subsequent frequency-domain operations in the MoE module.

\noindent\textbf{Multi-Source Observation Reorganization and Network Implementation.} For the observation tensor $x$, different source-excited wavefields can be viewed as multiple independent observations of the same subsurface medium, exhibiting intrinsic spatial correlations along the receiver dimension. Instead of treating each shot gather independently as batch samples, we explicitly concatenate the source dimension $N_s$ into the receiver dimension $N_r$ to form a unified panoramic observation matrix:
\begin{equation}
    x' = \operatorname{Reshape}(x) \in \mathbb{R}^{T \times (N_s \cdot N_r)}.
\end{equation}
This transformation flattens the original 3D tensor into a 2D global observation plane, enabling the model to aggregate scattering signatures across all shots jointly. We then feed $x'$ into a Vision Transformer backbone pre-trained via self-supervision (DINO) to obtain the latent representation:
\begin{equation}
    z = E_\theta(x') \in \mathbb{R}^{C \times H \times W}.
\end{equation}
This design ensures that $z$ is geometrically aligned with the target model $y$, providing a unified spatial input to the subsequent spectral MoE module.

\noindent\textbf{The Oversmoothing of Spatial Interpolation.} In handling multi-source observation alignment, naive spatial interpolation introduces severe oversmoothing. For common spatial interpolation operators such as bilinear interpolation, the frequency response typically behaves as a low-pass filter: low-frequency components are largely retained, whereas high-frequency components are more strongly attenuated. We model this property by assuming that the high-frequency response is uniformly upper-bounded by $\alpha$ and the low-frequency response is uniformly lower-bounded by $\beta$, with $0<\alpha<\beta\le 1$. As theoretically characterized below, interpolation imposes an upper bound on the HL ratio, thus damaging the high-frequency structure.

\begin{theorem}[Interpolation Imposes an Upper Bound on the HL Ratio]
\label{thm:interp_lowpass}
Let the interpolation operator $I:\mathbb{R}^{H\times W}\to\mathbb{R}^{H\times W}$ satisfy a multiplicative frequency-domain response:
\begin{equation}
\label{eq:response}
\widehat{I(u)}(\omega)=H_I(\omega)\widehat u(\omega).
\end{equation}
If there exist constants $0<\alpha<\beta\le 1$ such that $|H_I(\omega)|\le \alpha$ for $\forall \omega\in\Omega_H$ and $|H_I(\omega)|\ge \beta$ for $\forall \omega\in\Omega_L$, then for any $u\neq 0$, the following holds:
\begin{equation}
    \label{eq:thmA_bound}
    \mathrm{HL}(I(u))\le \frac{\alpha^2}{\beta^2}\,\mathrm{HL}(u).
\end{equation}
\end{theorem}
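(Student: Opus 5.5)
We argue directly from the definitions in Appendix~\ref{app:energy_def}3, which we take to be $E_L(u)=\sum_{\omega\in\Omega_L}|\widehat u(\omega)|^2$, $E_H(u)=\sum_{\omega\in\Omega_H}|\widehat u(\omega)|^2$ and $\mathrm{HL}(u)=E_H(u)/E_L(u)$. We adopt the natural convention that $\mathrm{HL}(u)=+\infty$ when $E_L(u)=0$, so the claim is trivial in that case. If the appendix adds a small stabilizing $\varepsilon$ to the denominator, the same argument goes through with a minor adjustment; see the last paragraph. The idea is to bound the numerator and denominator of $\mathrm{HL}(I(u))$ separately using the multiplicative response \eqref{eq:response}.

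\textbf{Step 1 (high band).} By \eqref{eq:response}, $|\widehat{I(u)}(\omega)|^2=|H_I(\omega)|^2|\widehat u(\omega)|^2$ for every frequency. Summing over $\Omega_H$ and using $|H_I|\le\alpha$ there gives
\begin{equation*}
E_H(I(u))\le \alpha^2 E_H(u).
\end{equation*}

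\textbf{Step 2 (low band).} Summing over $\Omega_L$ and using $|H_I|\ge\beta>0$ there gives
\begin{equation*}
E_L(I(u))\ge \beta^2 E_L(u).
\end{equation*}
In particular, if $E_L(u)>0$ then $E_L(I(u))>0$, so $\mathrm{HL}(I(u))$ is a finite, well-defined ratio.

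\textbf{Step 3 (combine).} When $E_L(u)>0$, divide the inequality of Step 1 by that of Step 2. Since all quantities are nonnegative and the denominators are positive,
\begin{equation*}
\mathrm{HL}(I(u))=\frac{E_H(I(u))}{E_L(I(u))}\le\frac{\alpha^2E_H(u)}{\beta^2E_L(u)}=\frac{\alpha^2}{\beta^2}\,\mathrm{HL}(u),
\end{equation*}
which is \eqref{eq:thmA_bound}. The hypothesis $u\neq0$ guarantees that $\widehat u\neq 0$, but it does not guarantee $E_L(u)>0$. The case $E_L(u)=0$ therefore has to be handled by the convention above, and this is the main point needing care.

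If instead the appendix defines $\mathrm{HL}(u)=E_H(u)/(E_L(u)+\varepsilon)$ with some $\varepsilon>0$, Step 2 only yields $E_L(I(u))+\varepsilon\ge\beta^2E_L(u)+\varepsilon\ge\beta^2(E_L(u)+\varepsilon)$. This holds because $\beta\le1$, which is exactly where that hypothesis is used, and the same division then gives the bound. The only real obstacle is matching the precise definition of $\mathrm{HL}$ in the appendix; the inequality chain itself is elementary.
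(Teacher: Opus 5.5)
Your proposal is correct and is essentially the paper's own proof: the paper likewise derives $E_H(I(u))\le\alpha^2E_H(u)$ and $E_L(I(u))\ge\beta^2E_L(u)$ from the multiplicative response and then divides. Appendix A3 does use the regularized definition $\mathrm{HL}(u)=E_H(u)/(E_L(u)+\varepsilon)$, so your final paragraph is the argument the paper actually gives, with $\beta\le 1$ invoked at the same step to absorb $\varepsilon$.
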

\begin{proof}
    See supplementary material~\ref{supply:proof_interp} for the complete proof.
\end{proof}

This theorem indicates that when interpolation attenuates high-frequency components more strongly than low-frequency components ($\alpha<\beta$), it strictly reduces the HL ratio. To overcome this limitation, we design a learning-based structural alignment.

\noindent\textbf{Theoretical Analysis of Spectral Preservation.} To theoretically ensure that the encoder does not induce systematic high-frequency collapse, we introduce a linear readout operator $R:\mathbb{R}^{C\times H\times W}\to\mathbb{R}^{H\times W}$, and define a comparable spatial field $u_c = R(E_\theta(x'))\in\mathbb{R}^{H\times W}$. The downstream prediction is written as $\hat y_c = F(u_c)$, where $F$ denotes the downstream operator. We adopt the following testable assumptions:

\textbf{A1: High-Frequency Non-Contractiveness. } The encoder preserves the dominant energy in the high-frequency band. That is, there exists $\delta \ge 1$ such that
\begin{equation}
    \label{eq:hf_noncontract}
    E_H(u_c) \ge \delta\,E_H(y),
\end{equation}
where $E_H$ denotes the spectral energy in the high-frequency band (see Appendix~\ref{app:energy_def} for details).

\textbf{A2: Controllable Low-Frequency Energy.} The low-frequency energy of the encoder output is of the same order of magnitude as that of the ground truth. That is, there exists $\kappa \ge 1$ such that
\begin{equation}
    \label{eq:lf_control}
    E_L(u_c) \le \kappa\,E_L(y),
\end{equation}
where $E_L$ denotes the spectral energy in the low-frequency band (see Appendix~\ref{app:energy_def} for details).

\textbf{A3: Boundedness of the Downstream Operator.} The amplification factors of the downstream prediction operator $F$ on different frequency bands are bounded. That is, there exist constants $0 < m \le M < \infty$ with $M\ge 1$ such that for each frequency band,
\begin{equation}
\label{eq:bounded_resp}
\begin{aligned}
    mE_H(u_c) &\le E_H(F(u_c)) \le ME_H(u_c), \\
    mE_L(u_c) &\le E_L(F(u_c)) \le ME_L(u_c).
\end{aligned}
\end{equation}

To validate the rationality of these assumptions, we compute empirical statistics on the test set (CurveVel-A) using FNO as the downstream operator. As summarized in Table~\ref{tab:spectral-analysis}, the DINO Encoder strongly preserves high-frequency energy on average ($\delta \approx 19$) and maintains a finite low-frequency bound ($\kappa \approx 35$), providing empirical support for assumptions A1--A3~\eqref{eq:hf_noncontract}--\eqref{eq:bounded_resp}. In contrast, the interpolation baseline severely collapses high-frequency content ($\mathrm{Ratio}_1 \approx 0$). Detailed metric definitions are provided in supplementary material~\ref{supply:valid}.

\begin{table}[htbp]
    \centering
    \small
    \setlength{\tabcolsep}{4pt}
    \renewcommand{\arraystretch}{1.15}
    \begin{tabular}{lccc}
        \toprule
        \textbf{Metric} & \textbf{Encoder} & \textbf{Interpolation} & \textbf{GT} \\
        \midrule
        $\mathrm{Ratio}_1$ ($E_H$ Preserv.) & 19.032 & $3.876 \times 10^{-6}$ & -- \\
        $\mathrm{Ratio}_2$ ($E_L$ Control) & 34.904 & $1.786 \times 10^{-7}$ & -- \\
        HL (Mean) & 0.105 & 0.084 & 0.111 \\
        $g_H$ (Order Range) & $[10^{-2},\;10^2]$ & $[10^{5},\;10^6]$ & -- \\
        $g_L$ (Order Range) & $[10^{-2},\;10^2]$ & $[10^{6},\;10^8]$ & -- \\
        \bottomrule
    \end{tabular}
    \caption{Empirical verification of spectral assumptions A1--A3. The proposed Spectral-Preserving DINO Encoder provides empirical support for the assumptions with bounded gains, whereas spatial interpolation exhibits severe high-frequency collapse and imbalanced frequency-band gains.}
    \label{tab:spectral-analysis}
\end{table}

Based on these empirically supported assumptions, we establish the following theorem to characterize the spectral preservation property of the overall framework:

\begin{theorem}[Spectral Preservation of the Spectral-Preserving DINO Encoder]
\label{main:thm}
Under assumptions A1--A3~\eqref{eq:hf_noncontract}--\eqref{eq:bounded_resp}, let the final prediction be $\hat y_c = F(u_c)$. Then for any sample, the high-to-low frequency energy ratio (HL) admits the following lower bound:
\begin{equation}
\label{eq:thm1_main}
\mathrm{HL}(\hat y_c) \;\ge\; \frac{m}{M}\cdot\frac{\delta}{\kappa}\cdot \mathrm{HL}(y).
\end{equation}
\end{theorem}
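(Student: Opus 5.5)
The plan is a direct chaining of the three assumptions through the definition $\mathrm{HL}(u)=E_H(u)/E_L(u)$ from the appendix. The proof only needs to bound the numerator from below and the denominator from above at each stage.

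First I will dispose of degenerate cases so that every ratio is well defined. The statement implicitly requires $\mathrm{HL}(y)$ to be finite, so I take $E_L(y)>0$; if instead $E_L(y)=0$, the right-hand side is undefined or infinite and I exclude that case. By A2 this gives $E_L(u_c)\le\kappa E_L(y)<\infty$. If $E_L(u_c)=0$, then the lower bound in A3 does not rule out $E_L(\hat y_c)=0$. I will therefore either treat $\mathrm{HL}(\hat y_c)=+\infty$ by convention, so the bound holds trivially, or note that A3's upper bound forces $E_L(\hat y_c)=0$ and handle it the same way. I expect this bookkeeping to be the only delicate point: the inequalities themselves are monotone manipulations of nonnegative quantities.

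For the main case, I will apply A3 to the prediction $\hat y_c=F(u_c)$, using the lower bound on the high band and the upper bound on the low band:
\begin{equation*}
\mathrm{HL}(\hat y_c)=\frac{E_H(F(u_c))}{E_L(F(u_c))}\ \ge\ \frac{m\,E_H(u_c)}{M\,E_L(u_c)}=\frac{m}{M}\,\mathrm{HL}(u_c).
\end{equation*}
Next I will use A1 in the numerator, $E_H(u_c)\ge\delta E_H(y)$, and A2 in the denominator, $E_L(u_c)\le\kappa E_L(y)$. This yields
\begin{equation*}
\mathrm{HL}(u_c)\ \ge\ \frac{\delta E_H(y)}{\kappa E_L(y)}=\frac{\delta}{\kappa}\,\mathrm{HL}(y).
\end{equation*}
Combining the two displays gives \eqref{eq:thm1_main}.

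I will close with a short remark. Unlike Theorem~\ref{thm:interp_lowpass}, where interpolation contracts $\mathrm{HL}$ by the factor $\alpha^2/\beta^2<1$, here the constant $\frac{m}{M}\cdot\frac{\delta}{\kappa}$ is controlled by the measured quantities of Table~\ref{tab:spectral-analysis}. This shows that the encoder avoids systematic high-frequency collapse.
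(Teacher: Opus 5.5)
The chain you propose is the same as the paper's: A3 on both bands, then A1 in the numerator and A2 in the denominator. But there is a gap, because you run it on the wrong quantity. The appendix defines $\mathrm{HL}(u)=E_H(u)/(E_L(u)+\varepsilon)$ with a fixed $\varepsilon>0$, not $E_H(u)/E_L(u)$. With the regularizer, both equalities in your displays are false. A3 only yields
\[
\mathrm{HL}(\hat y_c)\ \ge\ \frac{m\,E_H(u_c)}{M\,E_L(u_c)+\varepsilon}.
\]
Extracting the factor $m/M$ to reach $\frac{m}{M}\mathrm{HL}(u_c)$ requires $M E_L(u_c)+\varepsilon\le M\bigl(E_L(u_c)+\varepsilon\bigr)$, i.e.\ $M\ge 1$. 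If $M<1$, this lower bound is strictly smaller than $\frac{m}{M}\mathrm{HL}(u_c)$, so the step fails. Likewise, A1--A2 give $\mathrm{HL}(u_c)\ge \delta E_H(y)/(\kappa E_L(y)+\varepsilon)$. Reaching $\frac{\delta}{\kappa}\mathrm{HL}(y)$ then needs $\kappa E_L(y)+\varepsilon\le\kappa\bigl(E_L(y)+\varepsilon\bigr)$, i.e.\ $\kappa\ge 1$. These are exactly why A3 includes $M\ge 1$ and A2 includes $\kappa\ge 1$. That your argument never uses either hypothesis is the symptom of the dropped $\varepsilon$.

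The repair is to insert these two absorption steps, which is what the paper does. It rewrites the denominators as $E_L(u_c)+\varepsilon/M$ and $E_L(y)+\varepsilon/\kappa$, then enlarges each to $E_L+\varepsilon$ using $M\ge 1$ and $\kappa\ge 1$. With those steps added, your factorization through $\mathrm{HL}(u_c)$ is a correct, slightly more modular presentation of the same chain.

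Once $\varepsilon>0$ is in place, your degenerate-case bookkeeping is unnecessary. Every $\mathrm{HL}$ value is finite and well defined even when some $E_L$ vanishes. So you need neither a $+\infty$ convention nor the exclusion of $E_L(y)=0$.
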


\textit{Proof Sketch.} Based on the boundedness of the downstream operator in assumption A3~\eqref{eq:bounded_resp}, the output high-frequency energy is lower-bounded by $m E_H(u_c)$, while the low-frequency energy is upper-bounded by $M E_L(u_c)$. Substituting the non-contractiveness assumption A1~\eqref{eq:hf_noncontract} and the controllability assumption A2~\eqref{eq:lf_control} into these bounds, we can directly map the predicted HL ratio to the ground truth HL ratio, scaled by the constant factor $\frac{m}{M}\cdot\frac{\delta}{\kappa}$. The complete rigorous proof is provided in supplementary material~\ref{supply:proof_dino}.

This theorem indicates that as long as the Spectral-Preserving DINO Encoder does not compress high-frequency components ($\delta \ge 1$) and maintains finite low-frequency amplification ($\kappa$), and the downstream MoE operator does not introduce unbounded band distortion, the HL ratio of the final prediction remains controlled by a constant of the same order as the ground truth $y$, thereby guaranteeing spectral fidelity.

\subsection{Adaptive Spectral Mixture-of-Experts}
\label{subsec:afmoe}

FWI velocity models typically contain both smooth backgrounds and sharp interfaces across multiple scales. A fixed single-path model often struggles to disentangle the frequency components associated with different scales. To address this issue, we propose an Adaptive Spectral MoE, which consists of Concentric Soft Frequency-Band Decomposition, Adaptive Frequency-Preference Mechanism, Spectral Energy Attention Router, and Complementary Neural-Operator Experts.

\noindent\textbf{Concentric Soft Frequency-Band Decomposition.}
\label{subsubsec:soft_bands} We adopt a differentiable frequency partition using Gaussian soft masks. In the centered spectral coordinate system, let the number of bands be $K$, and the center of the $k$-th band be $c_k=\frac{k-1}{K-1}$. We define the Gaussian concentric soft-band mask as:
\begin{equation}
    M_k(i,j) = \exp\left(-\gamma (r(i,j) - c_k)^2\right),
\end{equation}
where $\gamma$ denotes band sharpness. We then apply the mask in the frequency domain and perform an inverse transform to obtain the feature for each band:
\begin{equation}
    z_k=\mathcal{U}^{-1}(\widehat{z}\odot M_k) \in\mathbb{R}^{C\times H\times W}.
\end{equation}

\noindent\textbf{Adaptive Frequency-Preference Mechanism.}
\label{subsubsec:freq_pref} After obtaining band-wise features, strictly binding each expert to a fixed band limits flexibility. To allow each expert to adaptively select suitable frequency regions while retaining its inductive bias, we introduce a learnable frequency-preference parameter for each expert. Each expert thus receives a soft combination of $\{z_k\}$ rather than a single band.

For each expert $e\in\{1,\dots,N_E\}$, we define a learnable scalar $f_e\in[0,1]$. The mixing weights are computed based on its distance to the band center $c_k$:
\begin{equation}
    s_{e,k}=-\eta\,(f_e-c_k)^2,\quad
    \pi_{e,k}=\frac{\exp(s_{e,k})}{\sum_{t=1}^{K}\exp(s_{e,t})}.
\end{equation}
where $\eta$ denotes frequency-affinity sharpness. The input feature to expert $e$ is constructed as:
\begin{equation}
    \tilde{z}_e=\sum_{k=1}^{K}\pi_{e,k}z_k.
\end{equation}
With this mechanism, each expert is able to adaptively focus on the most suitable frequency components around its preferred band.

\noindent\textbf{Spectral Energy Attention Router.}
\label{subsubsec:spectral_router} 
The above two subsections define the expert inputs based on frequency-domain features. We further require a routing mechanism that is explicitly sensitive to the spectral characteristics of the input signal and aligns well with expert inputs. Therefore, we design a lightweight spectral attention-based router driven by spectral energy. The router only uses the spectral energy distribution to generate gating weights, while the latent feature $z$ retaining full phase information is delivered to the activated experts for processing. 
Moreover, in the FWI context, inter-sample spectral energy distributions are crucial indicators for distinguishing different geological structures~\cite{partyka1999interpretational}.

Specifically, we first compute the energy map of the centered spectrum,
$\mathbf{A} = \sqrt{P_z(\omega)} \in \mathbb{R}^{H \times W}$, where $P$ denotes the power spectrum (see Appendix~\ref{app:energy_def}3 for details).
To capture global spectral dependencies, we build a self-attention layer:
\begin{equation}
    Q,K,V=\phi_{\mathrm{qkv}}(\mathbf{A}),
\end{equation}
\begin{equation}
    \mathrm{Attention}(Q,K,V) = \mathrm{softmax}\left(\frac{\langle Q,\, K \rangle_C}{\sqrt{d_k}}\right)V,
\end{equation}
\begin{equation}
    \mathbf{A}' = \phi_{\mathrm{agg}}(\mathrm{Attention}(Q,K,V)),
\end{equation}
where $\phi_{\mathrm{qkv}}$ denotes a linear projection, $\langle \cdot \rangle_C$ denotes the channel-wise inner product, $\phi_{\mathrm{agg}}$ is an aggregation network, and $d_k$ is the scaling factor. The spectral attention mechanism aggregates global spectral-energy patterns and identifies dominant band characteristics.

We then map the aggregated spectral feature $\mathbf{A}'$ to expert gating scores $g \in \mathbb{R}^{N_E}$, and apply a Top-$k$~\cite{shazeer2017outrageously} strategy to generate sparse routing decisions:
\begin{equation}
    \mathcal{T}(x) = \operatorname{TopK}(g, N_k), \quad
    \alpha_e(x) = \frac{\exp(g_e)}{\sum_{j \in \mathcal{T}(x)} \exp(g_j)}.
\end{equation}
With this design, the router adaptively activates the most suitable combination of experts according to the spectral energy of each sample.

\noindent\textbf{Complementary Neural-Operator Experts.}
\label{subsubsec:experts} For the FWI setting, we construct an expert set with complementary inductive biases as follows:

Low-Frequency Expert (FNO)~\cite{lifourier}: It leverages global spectral convolutions to recover background velocity structures:
\begin{equation}
    \mathcal{O}_{FNO}(\tilde{z}) =
    \mathcal{F}^{-1}\left(W \odot \mathcal{F}(\tilde{z})\right).
\end{equation}

Mid-Frequency Expert (MNO)~\cite{lutjens2022multiscale}: It models transitional stratigraphic structures via hierarchical multi-scale convolutional kernels:
\begin{equation}
    \mathcal{O}_{MNO}(\tilde{z}) =
    \sum_{s=1}^{S} \phi_s(K_s * \tilde{z}).
\end{equation}

High-Frequency Expert (LNO)~\cite{li2024local}: It captures faults and sharp interfaces using position-dependent local operators:
\begin{equation}
    \mathcal{O}_{LNO}(\tilde{z})(x) =
    \int_{\Omega_x} K(x,y)\tilde{z}(y)\,dy,
\end{equation}
where $\Omega_x$ denotes the local neighborhood of location $x$, and the operator targets high-frequency local variations.

\noindent\textbf{MoE Fusion.}
\label{subsubsec:moefusion} The final output is obtained via sample-wise weighted fusion:
\begin{equation}
\hat{y}
=
\sum_{e \in \mathcal{T}(x)}
\alpha_e(x) \, \mathcal{O}_{e}(\tilde{z})
\end{equation}
where $\mathcal{T}(x)$ is the set of selected expert indices, $\alpha_e(x)$ denotes the gating weight, and $\mathcal{O}_e$ denotes the operator implemented by expert $e$.
%

\section{Experiments}

\subsection{Datasets}

We conduct systematic evaluations of SPAMoE on all ten official 2D sub-datasets of the OpenFWI benchmark, including CurveVel-A/B, FlatVel-A/B, CurveFault-A/B, FlatFault-A/B, and Style-A/B, and follow the official data splits and evaluation protocol provided by OpenFWI for training and evaluation. For all geological families, ``A'' versions correspond to smoother, lower-complexity structures, while ``B'' versions include more irregular layering, stronger nonlinearity, and higher-frequency reflections, posing greater challenges for recovering multi-scale details and high-frequency structures.

\noindent\textbf{Dataset composition.}
The Vel sub-datasets (FlatVel-A/B and CurveVel-A/B) contain 24k/6k training and validation samples; the Fault sub-datasets (FlatFault-A/B and CurveFault-A/B) provide 48k/6k samples; and the Style-A/B sub-datasets contain 60k/7k samples.
Each sample consists of five seismic shot gathers
$\mathbf{S}\in\mathbb{R}^{5\times1000\times70}$
and a ground-truth velocity map
$\mathbf{V}\in\mathbb{R}^{1\times70\times70}$,
where 1000 denotes the temporal sampling length and 70 denotes the number of receivers.
Following the OpenFWI input convention, we concatenate the five shots along the receiver axis to form a single-channel input
$\mathbf{S}'\in\mathbb{R}^{1\times1000\times350}$,
so that the model learns end-to-end from a unified input tensor.
The sample sizes and split settings are consistent with the official OpenFWI benchmark.

\noindent\textbf{Preprocessing.}
We apply a log transform to the seismic inputs to compress the dynamic range and improve numerical stability, followed by per-sub-dataset min--max normalization; the velocity maps are also scaled by per-sub-dataset min--max normalization. No additional augmentation is used.

\subsection{Experimental Setup}
\paragraph{Implementation.}
All models are implemented in PyTorch~2.8. We train the model with the AdamW optimizer and a warmup cosine schedule with restarts. We train one independent model per OpenFWI 2D sub-dataset. All models share the same architecture and loss design, while optimization hyperparameters follow a unified configuration (Table~\ref{tab:hyperparams}). Training is conducted on dual RTX~4090 GPUs with a per-GPU batch size of 32. Unless otherwise stated, all settings apply to all ten sub-datasets.

We report three standard image-level reconstruction metrics on the OpenFWI test sets: mean absolute error (MAE), root mean squared error (RMSE), and peak signal-to-noise ratio (PSNR). MAE and RMSE are computed pixel-wise between the predicted and ground-truth velocity maps (lower is better), while PSNR measures reconstruction fidelity in decibels (higher is better). Following the OpenFWI practice of reporting the best-performing checkpoint, we evaluate the metrics at each epoch during training and report the results from the epoch with the lowest MAE for each sub-dataset.

\noindent\textbf{Comparison methods.}
We compare our SPAMoE model against the three official OpenFWI baselines: InversionNet~\cite{wu2019inversionnet}, VelocityGAN~\cite{zhang2019velocitygan}, and UPFWI~\cite{jin2021unsupervised}, and additionally include FNO~\cite{lifourier} as a single neural-operator baseline to quantify the gains brought by our spectral decoupling and MoE design over a single-operator backbone. We choose InversionNet, VelocityGAN, and UPFWI because they 
are the most commonly used and paradigm-covering baselines in the OpenFWI benchmark~\cite{deng2022openfwi}, representing supervised CNN-based direct inversion (InversionNet), supervised GAN-based inversion (VelocityGAN), and physics-informed unsupervised inversion with differentiable forward modeling (UPFWI).

\begin{table*}[!t]
\centering
\setlength{\tabcolsep}{3pt}
\renewcommand{\arraystretch}{1.15}
\begin{adjustbox}{max width=\textwidth}
\begin{tabular}{ll ccc ccc ccc ccc ccc ccc}
\toprule
\multirow{2}{*}{\textbf{\textit{Family}}} & \multirow{2}{*}{\textbf{\textit{Subset}}} 
& \multicolumn{3}{c}{\textbf{\textit{InversionNet}}} 
& \multicolumn{3}{c}{\textbf{\textit{VelocityGAN}}} 
& \multicolumn{3}{c}{\textbf{\textit{UPFWI}}} 
& \multicolumn{3}{c}{\textbf{\textit{FNO}}} 
& \multicolumn{3}{c}{\textbf{\textit{Ours (SPAMoE)}}} \\
\cmidrule(lr){3-5}\cmidrule(lr){6-8}\cmidrule(lr){9-11}\cmidrule(lr){12-14}\cmidrule(lr){15-17}
& & MAE$\downarrow$ & RMSE$\downarrow$ & SSIM$\uparrow$
  & MAE$\downarrow$ & RMSE$\downarrow$ & SSIM$\uparrow$
  & MAE$\downarrow$ & RMSE$\downarrow$ & SSIM$\uparrow$
  & MAE$\downarrow$ & RMSE$\downarrow$ & SSIM$\uparrow$
  & MAE$\downarrow$ & RMSE$\downarrow$ & SSIM$\uparrow$ \\
\midrule

\multirow{4}{*}{Vel}
& FlatVel-A   & 0.0111 & 0.0180 & 0.9895 & 0.0118 & 0.0178 & 0.9916 & 0.0621 & 0.1233 & 0.9563 & 0.0494 & 0.0839 & 0.8587 & \best{0.0020} & \best{0.0069} & \best{0.9982} \\
& FlatVel-B   & 0.0351 & 0.0876 & 0.9461 & 0.0328 & 0.0787 & 0.9556 & 0.0677 & 0.1493 & 0.8874 & 0.0727 & 0.1457 & 0.8334 & \best{0.0082} & \best{0.0350} & \best{0.9872} \\
& CurveVel-A  & 0.0685 & 0.1202 & 0.8223 & 0.0482 & 0.0976 & 0.8758 & 0.0805 & 0.1411 & 0.8443 & 0.1043 & 0.1592 & 0.7286 & \best{0.0272} & \best{0.0733} & \best{0.9278} \\
& CurveVel-B  & 0.1497 & 0.2801 & 0.6661 & 0.1268 & 0.2611 & 0.7111 & 0.1777 & 0.3179 & 0.6614 & 0.2028 & 0.3141 & 0.5596 & \best{0.0725} & \best{0.1864} & \best{0.8362} \\
\midrule

\multirow{4}{*}{Fault}
& FlatFault-A & 0.0172 & 0.0362 & 0.9798 & 0.0319 & 0.0531 & 0.9798 & 0.0876 & 0.2060 & 0.9340 & 0.0411 & 0.0838 & 0.9184 & \best{0.0059} & \best{0.0180} & \best{0.9939} \\
& FlatFault-B & 0.1055 & 0.1723 & 0.7208 & 0.0925 & 0.1553 & 0.7552 & 0.1416 & 0.2220 & 0.6937 & 0.1346 & 0.1936 & 0.6709 & \best{0.0538} & \best{0.1084} & \best{0.8729} \\
& CurveFault-A& 0.0260 & 0.0602 & 0.9592 & 0.0216 & 0.0505 & 0.9687 & 0.0500 & 0.0966 & 0.9495 & 0.0509 & 0.1013 & 0.8952 & \best{0.0104} & \best{0.0331} & \best{0.9864} \\
& CurveFault-B& 0.1646 & 0.2412 & 0.6163 & 0.1571 & 0.2336 & 0.6033 & 0.3452 & 0.5010 & 0.3941 & 0.1849 & 0.2595 & 0.5729 & \best{0.1062} & \best{0.1838} & \best{0.7314} \\
\midrule

\multirow{2}{*}{Style}
& Style-A     & 0.0610 & 0.0989 & 0.8910 & 0.0612 & 0.1000 & 0.8883 & 0.1429 & 0.2342 & 0.7846 & 0.0848 & 0.1299 & 0.8388 & \best{0.0350} & \best{0.0656} & \best{0.9504} \\
& Style-B     & 0.0586 & 0.0893 & 0.7599 & 0.0649 & 0.0979 & 0.7249 & 0.1702 & 0.2609 & 0.6102 & 0.0693 & 0.1038 & 0.7139 & \best{0.0401} & \best{0.0682} & \best{0.8513} \\
\midrule
\multicolumn{2}{l}{\textbf{Avg.}} 
& 0.0697 & 0.1204 & 0.8351 & 0.0649 & 0.1146 & 0.8454 & 0.1326 & 0.2252 & 0.7716 & 0.0995 & 0.1575 & 0.7590 & \best{0.0361} & \best{0.0779} & \best{0.9136} \\
\bottomrule
\end{tabular}
\end{adjustbox}
\caption{Quantitative comparison on OpenFWI (10 sub-datasets). 
Lower MAE/RMSE and higher SSIM indicate better performance. 
Best results are in \textbf{bold}.}
\label{tab:openfwi_main}
\end{table*}

\subsection{Main Results}
Table~\ref{tab:openfwi_main} summarizes the quantitative comparison of our method against InversionNet, VelocityGAN, UPFWI and FNO on the ten OpenFWI sub-datasets, evaluated using MAE, RMSE and SSIM. For baselines with multiple loss settings reported in OpenFWI, we use the best officially reported results for fair comparison. We follow the official OpenFWI evaluation protocol, using the same metric definitions and code from the OpenFWI repository under the official splits. Our method \textbf{achieves the best performance on all 10/10 sub-datasets}. In terms of averaged metrics, compared with the strongest baseline VelocityGAN, we reduce MAE from 0.0649 to 0.0361 (a \textbf{44.4\%} relative drop) and RMSE from 0.1146 to 0.0779 (a \textbf{32.0\%} relative drop) while improving SSIM from 0.8454 to 0.9136. Compared with the operator baseline FNO, our method further reduces MAE by \textbf{63.7\%} and RMSE by \textbf{50.5\%}. These results indicate that our architecture delivers consistent reconstruction advantages across diverse geological structures and data distributions. A comparison between the subsurface velocity maps predicted by our model and the FNO baseline is shown in Figure~\ref{fig:mainresult}.

\begin{figure}[t]
    \centering
    \includegraphics[width=0.8\linewidth]{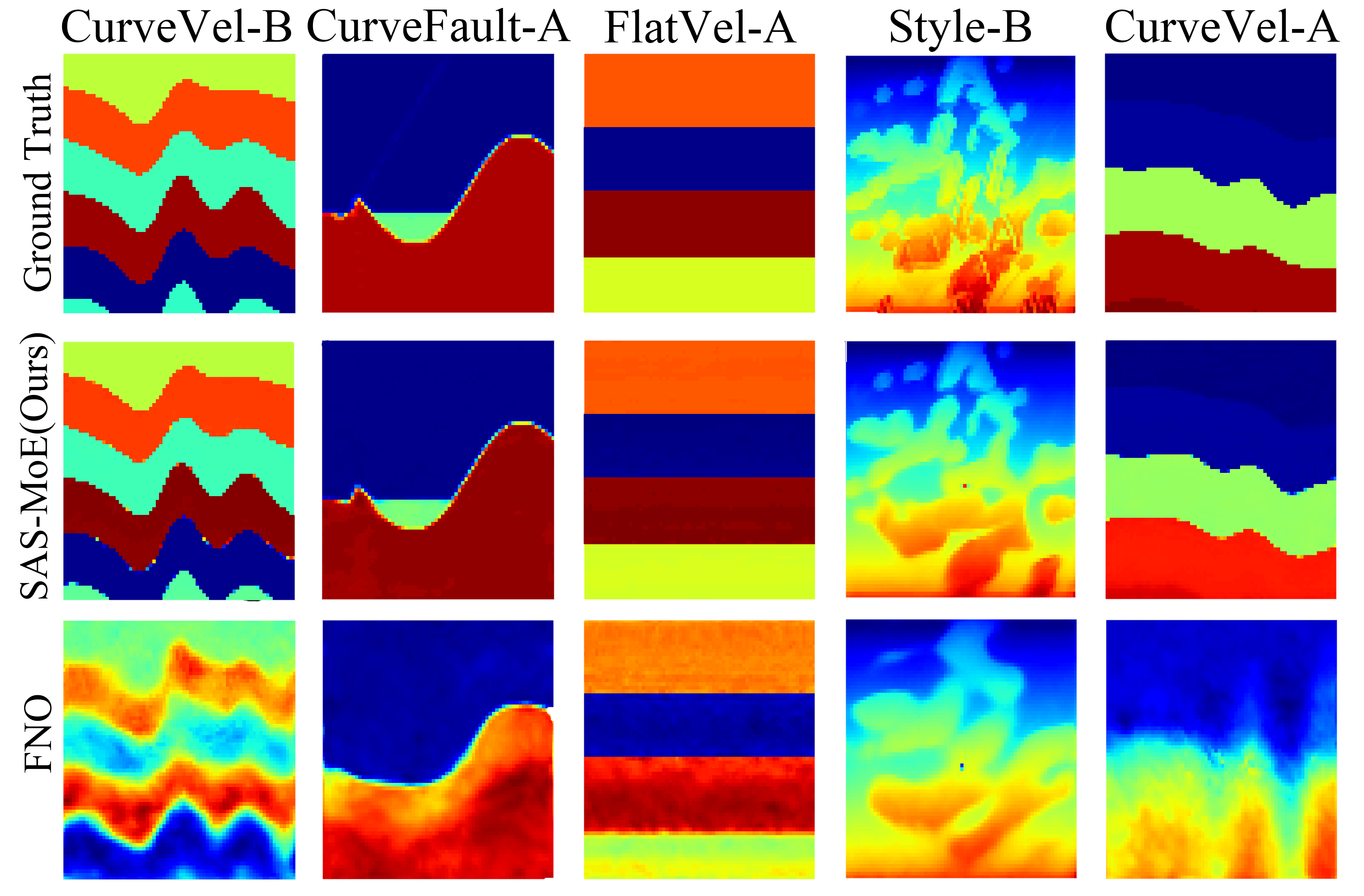}
    \caption{
    \textbf{Comparison of subsurface velocity maps predicted by SPAMoE and the FNO baseline.} 
Top row: ground-truth velocity maps; middle row: SPAMoE predictions; bottom row: FNO predictions.}
    \label{fig:mainresult}
\end{figure}

\noindent\textbf{Discussion:} 
We observe particularly pronounced gains on the more challenging ``B'' sub-datasets, which typically exhibit higher structural complexity and stronger high-frequency reflections. For example, compared with VelocityGAN, our method reduces MAE by 42.8\% on CurveVel-B, 41.8\% on FlatFault-B, and 32.4\% on CurveFault-B. These improvements suggest that explicitly decomposing multi-scale spectral content and modeling it adaptively is crucial for recovering complex details: in such cases, conventional methods that process the entire spectrum through a single pathway are more prone to cross-band interference and over-smoothed reconstructions. In contrast, our Concentric Soft Frequency-Band Decomposition together with the Spectral Energy Attention Router enables band-aware capacity allocation across experts, improving high-frequency detail recovery while preserving the global structure.

On the relatively smoother ``A'' sub-datasets, our method also yields substantial gains (e.g., 83.1\% on FlatVel-A and 81.5\% on FlatFault-A), indicating that the advantage is not limited to highly complex cases. Here, the key benefit lies in better balancing global background accuracy and local-detail sharpness: our frequency decomposition and MoE design can emphasize low-frequency information under smoother structures for stable background reconstruction, while allocating more capacity to higher-frequency bands when finer details are present. For sub-datasets with stronger style perturbations such as Style-B, our method still achieves a clear gain, reducing MAE by 38.2\%. Overall, these results further support that decomposing multi-scale spectral information with adaptive modeling is effective across diverse geological conditions in full-waveform inversion.

\begin{figure}[t]
  \centering
  \begin{subfigure}[b]{0.49\linewidth}
    \centering
    \includegraphics[width=\linewidth]{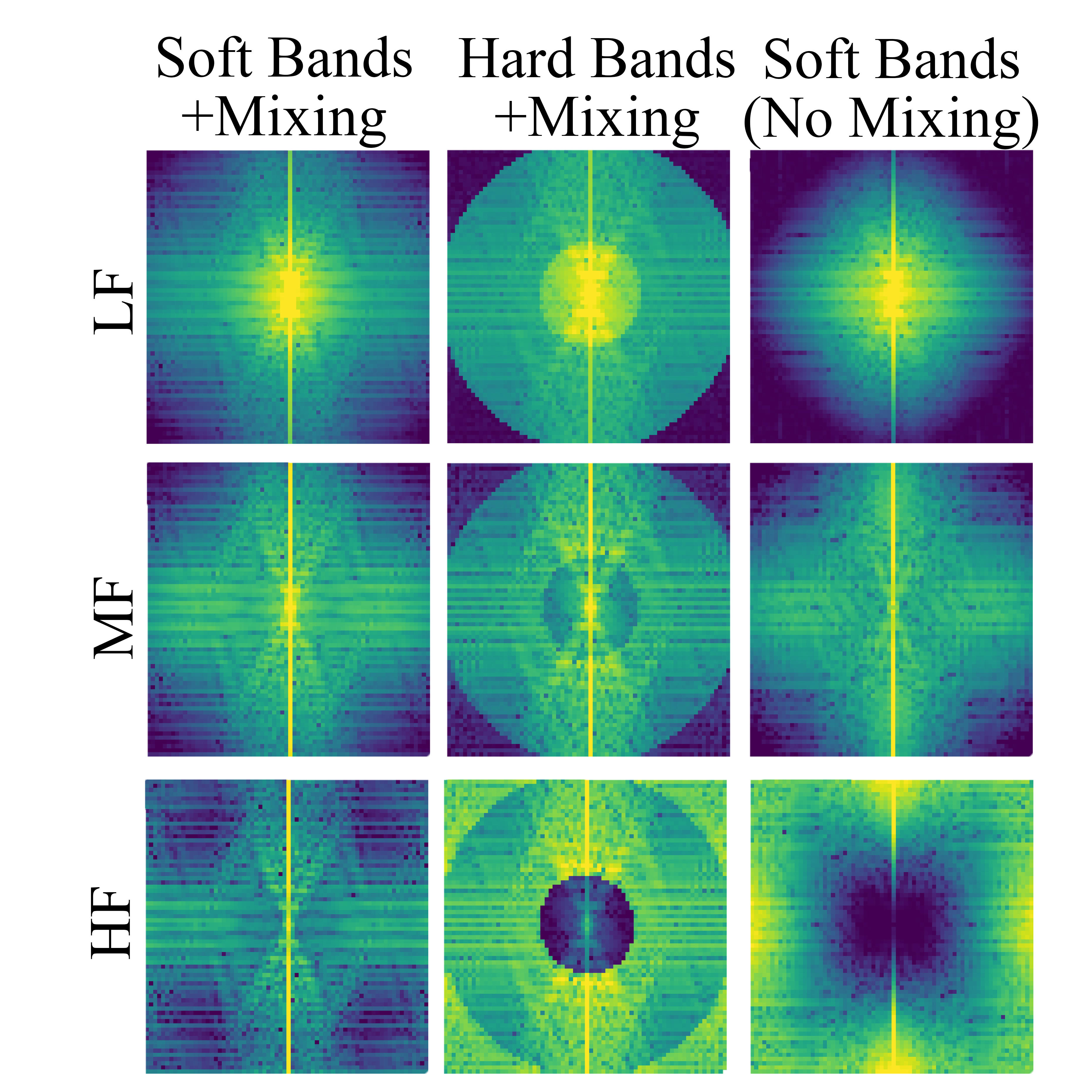}
    \caption{Spectra under different band partition/mixing strategies.}
    \label{fig:ex2}
  \end{subfigure}\hfill
  \begin{subfigure}[b]{0.49\linewidth}
    \centering
    \includegraphics[width=\linewidth]{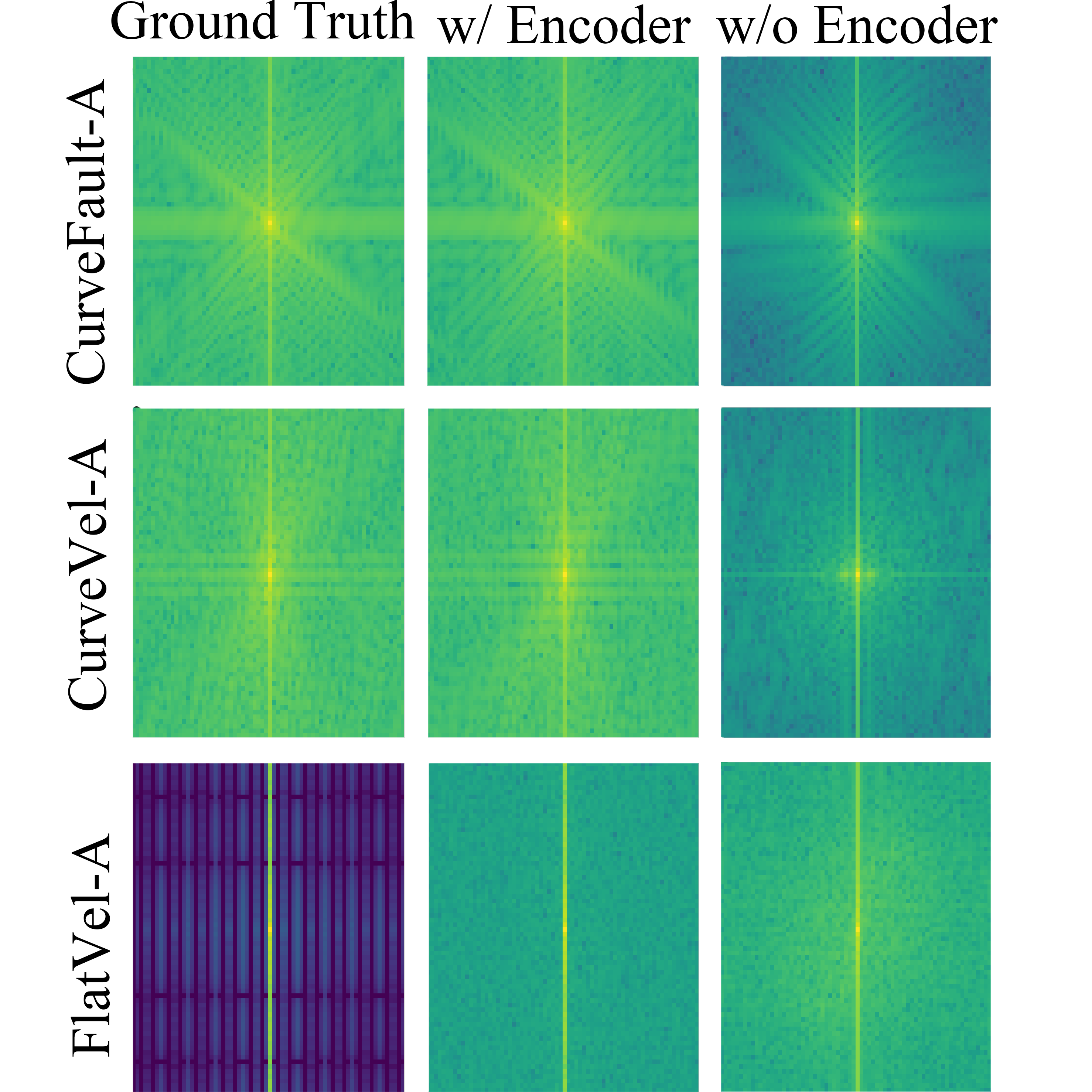}
    \caption{Spectra with and without the encoder.}
    \label{fig:encoder}
  \end{subfigure}
  \caption{Spectral visualizations of ablation study.}
  \label{fig:ex2_encoder}
\end{figure}

\subsection{Ablation Study}
We conduct ablation studies on three representative OpenFWI sub-datasets, CurveFault-A, CurveVel-A, and FlatVel-A, which respectively cover three typical geological regimes: smooth structures, curved stratified layers, and faulted structures. Focusing on the five key components of SPAMoE, we design the following experiments. Unless otherwise stated, we ablate one component at a time while keeping the rest unchanged. Figure~\ref{fig:ex2_encoder} provides spectral visualizations for Ablation Experiments~1 and~4.

\noindent\textbf{1. Effect of Spectral-Preserving DINO Encoder.} With all other components enabled, the mean MAE over the three sub-datasets is 0.0131 with the encoder and 0.0681 without it, showing that the Spectral-Preserving DINO Encoder substantially reduces the overall reconstruction error and serves as a key source of the performance gains. We further compare different encoder designs on CurveVel-B with a fixed FNO backbone (Table~\ref{tab:ablation-encoder}). Compared with directly using the original-resolution input 1000$\times$350 or naively interpolating the waveform to $70{\times}70$, introducing a Spectral-Preserving DINO Encoder markedly improves reconstruction accuracy. Figure~\ref{fig:encoder} shows that incorporating the encoder significantly outperforms the method without it in terms of spectral anisotropy and energy distribution, and the ViT-based variant further achieves the best MAE and RMSE among all configurations. We therefore use the ViT-based DINOv3~\cite{simeoni2025dinov3} encoder as the default choice in subsequent experiments.
\begin{table}[t]
    \centering
    \small
    \setlength{\tabcolsep}{3pt}
    \renewcommand{\arraystretch}{1.15}
    \begin{tabular}{lccc}
        \toprule
        \textbf{Encoder Type} & \textbf{MAE} $\downarrow$ & \textbf{RMSE} $\downarrow$ & \textbf{SSIM} $\uparrow$ \\
        \midrule
        None (original, 1000$\times$350)   & 0.1342 & 0.2579 & 0.7045 \\
        None (resize to 70$\times$70)     & 0.1881 & 0.3165 & 0.6110 \\
        ConvNeXt-based (DINOv3)           & 0.0643 & 0.1713 & 0.8615 \\
        ViT-based (DINOv3)                & \textbf{0.0603} & \textbf{0.1664} & \textbf{0.8626} \\
        \bottomrule
    \end{tabular}
    \caption{Effect of Spectral-Preserving DINO Encoder on 
    CurveVel-B (FNO backbone). ViT-based and ConvNeXt-based encoders are implemented using DINOv3 and trained under identical settings.}
    \label{tab:ablation-encoder}
\end{table}

\noindent\textbf{2. Effect of Spectral Energy Attention Router.} We compare the Spectral Energy Attention Router with a conventional router that relies only on spatial latent features to verify the importance of spectral-energy information for expert selection. Across the three sub-datasets, the Spectral Energy Attention Router achieves an average MAE of 0.01355, while the conventional router attains 0.02218, corresponding to a 38.9\% relative improvement. This is because routing based solely on spatial latent features tends to depend more on the apparent structural patterns of a sample rather than its spectral energy distribution, whereas multi-scale coupling across structures is a key difficulty in FWI, making conventional routing more prone to suboptimal expert assignment on complex samples. In contrast, the Spectral Energy Attention Router takes the \emph{Amplitude Energy Map} as input and aggregates global energy-distribution patterns via attention, so that routing decisions are explicitly driven by spectral structure, leading to more stable expert selection and improved inversion accuracy.

\noindent\textbf{3. Effect of Concentric Soft Frequency-Band Decomposition.}
Using concentric soft frequency-band decomposition (Gaussian concentric soft masks) significantly outperforms hard frequency-band partitioning (Step-function masks), reducing the MAE averaged over the three sub-datasets from 0.01721 to 0.01355 (a 21.3\% relative reduction). The core reason is that the indicator function used in hard partitioning introduces non-differentiable step boundaries, causing abrupt truncation in the frequency domain and consequent information loss around band cutoffs, so that subsequent routing and expert modeling cannot leverage frequency-structure information outside the truncated boundaries. In contrast, soft-band decomposition adopts continuous and differentiable Gaussian radial masks, providing smooth transitions and mild overlap between adjacent bands; this stabilizes the band separation process and preserves informative frequency components near band boundaries, thereby better supporting reliable band-wise modeling and expert collaboration.This difference is also evident in the first two columns of Fig.~\ref{fig:ex2}(a): under soft bands, the LF/MF/HF energy distributions exhibit smooth radial decay with natural transitions across bands, whereas hard bands produce sharper cutoffs and clear ring-shaped boundary artifacts, particularly in mid/high-frequency bands where unnatural energy discontinuities are more pronounced. These spectral-level observations are consistent with the quantitative results, further supporting that differentiable soft-band decomposition is more suitable for FWI with multi-scale spectral characteristics.


\noindent\textbf{4. Effect of Adaptive Frequency-Preference Mechanism.}
Compared with the full model, removing the Adaptive adaptive frequency-preference mechanism increases the MAE averaged over the three sub-datasets from 0.01355 to 0.01645 (a 17.6\% relative increase), indicating that this mechanism is necessary for stably exploiting the benefits of multiple experts. This is because, in FWI, the spectral energy of velocity models is rarely confined to a single fixed band; in particular, around complex structures such as fault boundaries and curved stratigraphy, informative energy often spans neighboring bands with continuous transitions. Under a static assignment that strictly binds each expert to one band, experts become constrained by band boundaries: they cannot access transitional frequency components from other bands, and residual non-target-band energy may also interfere with specialization, ultimately leading to insufficient division of labor or expert mismatch.In contrast, the Adaptive Frequency-Preference Mechanism allows each expert, while being assigned its preferred band, to softly absorb necessary transitional components from adjacent bands. This better matches the continuous multi-scale spectral variation in FWI, enabling experts to model their preferred frequency patterns more stably and collaborate more effectively. The band-wise spectra visualized in the first and last columns of Fig.~\ref{fig:ex2} further support this: with the mechanism enabled, the energy distributions are more continuous and cross-band transitions are more natural; without it, band energy becomes more fragmented with noticeable missing components, especially in mid/high-frequency bands. This indicates that static band binding weakens the utilization of cross-band structural information and thus degrades overall inversion accuracy.

\begin{table}[t]
\centering
    \small
    \setlength{\tabcolsep}{1.5pt} 
    \renewcommand{\arraystretch}{1.15}
    \begin{tabular}{lccc}
        \toprule
        \textbf{Model} & \textbf{CurveFault-A} & \textbf{FlatVel-A} & \textbf{CurveVel-A} \\
        \midrule
        FNO & 0.01130 & 0.00242 & 0.02704\\
        MNO & 0.01158 & 0.00225 & 0.02897\\
        LNO & 0.01122 & 0.00223 & 0.02752 \\
        \textbf{MoE (Ours)}& \textbf{0.01107} & \textbf{0.00186} & \textbf{0.02664}\\
        \bottomrule
    \end{tabular}
    \caption{Performance comparison of the proposed Adaptive Spectral MoE against single expert operators (FNO, MNO, and LNO) in terms of MAE.}
    \label{tab:ablation-moe-comparison}
\end{table}


\noindent\textbf{5. Effect of Multi-Operator MoE vs. Single-Operator Baselines.}
We compare the multi-operator MoE with single-operator variants (FNO/MNO/LNO) on the three ablation sub-datasets (Table~\ref{tab:ablation-moe-comparison}). The MoE achieves the lowest MAE on CurveFault-A, FlatVel-A, and CurveVel-A, indicating that combining multiple operator experts can consistently outperform any individual operator. The underlying reason is that different neural operators exhibit distinct modeling preferences over frequency content and spatial structures: some are better at capturing smooth backgrounds and low-frequency trends, while others are more sensitive to local variations and higher-frequency details. A single-operator model must cover all scale components with a single inductive bias, which often leads to a trade-off between global structure and fine details; in contrast, the multi-operator MoE enables adaptive collaboration among experts conditioned on each sample, thereby preserving the overall structure while more effectively complementing local details and variations.

Numerically, the MoE yields the most evident gain on FlatVel-A, while improvements on CurveVel-A and CurveFault-A are more moderate yet remain consistently best. When the sample structure is more complex or the band-wise distribution is more imbalanced, multi-expert collaboration tends to be more beneficial; when a single operator already captures the dominant structure well, the MoE can still provide stable marginal gains through expert complementarity.

\subsection{Additional Experiments on Pipe Flows}
\label{supply:pipe}
\subsubsection{Experiment Introduction}
This experiment aims to verify the generalization capability of our proposed model in addressing fluid dynamics problems, specifically its ability to solve PDEs under irregular geometric boundaries. We selected the classic pipe flows problem as our testing benchmark. Although the pipe flows problem is primarily governed by the Navier-Stokes equations-differing in physical mechanism from the wave equation involved in FWI—both share significant mathematical commonalities: they rely on capturing features within specific frequency domains of complex physical fields and are highly sensitive to variations in boundary conditions. By conducting experiments on the pipe flows dataset, we aim to demonstrate that our model not only performs excellently in FWI tasks but also possesses the potential to analyze and process other types of PDE problems with complex spectral characteristics, enabling precise modeling of fluid dynamic behaviors within confined spaces.

\subsubsection{Dataset Introduction}
The dataset used in this experiment is derived from the standard benchmark data generated in the Geo-FNO~\cite{li2023geofno} paper, which primarily simulates steady-state fluid flow within a two-dimensional pipe. A distinguishing feature of this dataset is the random variation in the pipe's geometry, which poses a challenge for the model in learning the nonlinear relationship between grid mapping and physical fields.

\noindent\textbf{Data Composition.} We divided the complete Pipe dataset into a training set of 1,000 samples and a test set of 200 samples. Each sample consists of input data and target output data. The input data comprises the geometric coordinates of the grid points, $X \in \mathbb{R}^{129\times129}$ and $Y \in \mathbb{R}^{129\times129}$, while the target output data represents the fluid velocity field. This experiment focuses primarily on the horizontal velocity component, $Q \in \mathbb{R}^{129\times129}$.

\noindent\textbf{Preprocessing.} For the input data, we first applied a log transform followed by min–max normalization. For the output data, we applied min–max normalization as preprocessing.

\subsubsection{Experimental Setup}
To fairly evaluate model performance, we adopted a rigorous comparative experimental setup. We selected the LaMO~\cite{tiwari2025latent} model, which has demonstrated superiority in handling complex geometric PDE problems, as our primary baseline. Both our model and LaMO were trained and tested on the exact same dataset generated by Geo-FNO, using the same split ratio (1,000 samples for training and 200 samples for testing). Furthermore, we adopted the same evaluation metric as LaMO—the $Relative\,\ell_2$ Error—as our core metric, The evaluation was conducted on the test set, and the results were averaged. This metric eliminates dimensional influence and objectively reflects the overall degree of deviation between the predicted physical field and the ground truth. The calculation formula is as follows:
\begin{equation} 
\mathrm{Relative}\,\ell_2 = \frac{\| \hat{y} - y \|_2}{\| y \|_2}
\end{equation}
where $\hat{y}$ represents the velocity field predicted by the model, and $y$ represents the ground truth velocity field.

\subsubsection{Result}
Experimental results show that the LaMO model achieves a relative $\ell_2$ error of 0.0038 on this dataset, whereas our method further reduces the error to 0.0025, corresponding to an improvement of approximately 34.2\%. As visualized in Figure~\ref{fig:pipe}, our predicted flow fields exhibit close agreement with the ground-truth pipe-flow turbulence, indicating higher fidelity in capturing fine-scale structures of the velocity distribution. These results provide strong evidence for the effectiveness of our architectural design: even when transferred from seismic inversion to fluid dynamics, the proposed model maintains robust fitting capability. Beyond validating its spectral modeling advantages consistent with those observed in FWI, this also highlights its potential as a general-purpose PDE solver for a broader range of scientific computing tasks.

\begin{figure}[h]
    \centering
    \includegraphics[width=0.8\linewidth]{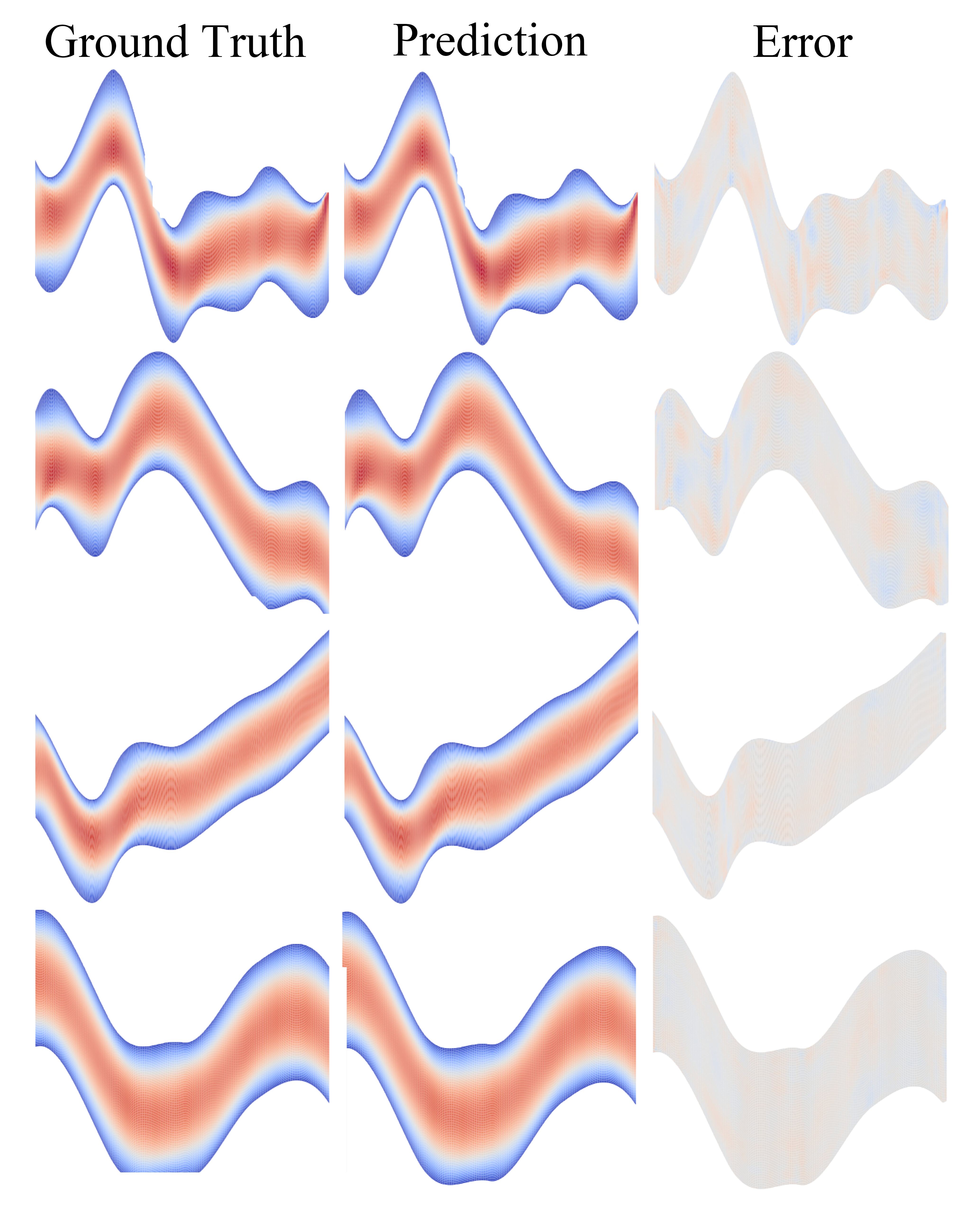}
    \caption{
        \textbf{Qualitative comparison of pipe flow velocity fields on the Pipe Flows dataset.} From left to right, each column shows the ground-truth velocity field, the prediction of our model, and the corresponding absolute error.Each row corresponds to a different test sample.
    }
    \label{fig:pipe}
\end{figure}

\section{Conclusion}
In this paper, we proposed SPAMoE, a unified framework for full-waveform inversion. 
By integrating a Spectral-Preserving DINO Encoder and an Adaptive Spectral Mixture-of-Experts module, SPAMoE effectively addresses the challenges of multi-scale frequency entanglement and the subsequent modeling of disentangled components. Experiments show that SPAMoE reduces the average MAE over the ten OpenFWI sub-datasets from 0.0649 (the best reported baseline) to 0.0361, a \textbf{44.4\%} relative reduction. In addition, SPAMoE also achieves strong performance on the pipe flows task, suggesting that the proposed framework has the potential to generalize to other challenging PDE learning problems.

\appendix

\section{Detailed Definition of Notation}

\noindent\textbf{A1. Centered Representation of the 2D Discrete Fourier Transform.}
\label{app:freq_def}
As described in section~\ref{subsec:preliminaries} of the main text, we define the operator $\mathrm{shift}(\cdot)$ to move the zero-frequency component to the center of the spectrum. 
Let the two-dimensional discrete fourier transform be denoted by $\widehat{v} = \mathcal{F}(u) \in \mathbb{C}^{H \times W}$. 
The centered spectral representation is then defined as
$
\widehat{u} = \mathcal{U}(u) = \mathrm{shift}(\widehat{v}) = \mathrm{shift}(\mathcal{F}(u))
$, and we define the inverse transform as 
$
u = \mathcal{U}^{-1}(\widehat{u}).
$

\noindent\textbf{A2. Definition of the Frequency Coordinate Grid.}
\label{app:grid_def}
As described in section~\ref{subsec:preliminaries}, we define the normalized radial frequency $r(\omega)$, where $\omega=(i,j)$ denotes the discrete frequency index in the centered spectrum; that is, $r(\omega)=r(i,j)$.

The normalized radial frequency $r(i,j)$ is defined on the centered spectrum as
\begin{equation}
r(i,j)
=
\frac{
\sqrt{
\left(-1 + \frac{2j}{W-1}\right)^2
+
\left(-1 + \frac{2i}{H-1}\right)^2
}
}{
\max\limits_{p,q}
\sqrt{
\left(-1 + \frac{2q}{W-1}\right)^2
+
\left(-1 + \frac{2p}{H-1}\right)^2
}
},
\end{equation}
where $H$ and $W$ denote the height and width of the spatial grid, respectively.

This formulation maps the discrete frequency coordinates to a normalized Cartesian domain $[-1,1]\times[-1,1]$, with the spectral center corresponding to zero frequency.
The resulting quantity $r(i,j)\in[0,1]$ therefore provides a monotonic measure of the physical frequency magnitude with respect to the radial distance from the spectrum center.

Based on this radial metric, the frequency domain can be naturally decomposed into concentric bands by thresholding $r(i,j)$, which enables frequency-aware partitioning and subsequent band-wise processing in our spectral routing module.

\noindent\textbf{A3. Definition of Spectral Energy.}
\label{app:energy_def}
As described in section~\ref{subsec:preliminaries} of the main text, the power spectrum is defined as $P_u(\omega) = |\widehat{u}(\omega)|^2$. 
Given two frequency sets $\Omega_L$ and $\Omega_H$, the spectral energy of the field $u$ in the low-frequency and high-frequency bands is defined as
$
    E_L(u) = \sum_{\omega\in\Omega_L} P_u(\omega),
    \,
    E_H(u) = \sum_{\omega\in\Omega_H} P_u(\omega).
$
The high-to-low frequency energy ratio is further defined as
\begin{equation}
    \mathrm{HL}(u)
    =
    \frac{E_H(u)}{E_L(u) + \varepsilon},
\end{equation}
where $\varepsilon > 0$ is a numerical stability term introduced to avoid division by zero.

\bibliographystyle{named}
\bibliography{ijcai26}

@article{deng2022openfwi,
  title={OpenFWI: Large-scale multi-structural benchmark datasets for full waveform inversion},
  author={Deng, Chengyuan and Feng, Shihang and Wang, Hanchen and Zhang, Xitong and Jin, Peng and Feng, Yinan and Zeng, Qili and Chen, Yinpeng and Lin, Youzuo},
  journal={Advances in Neural Information Processing Systems},
  volume={35},
  pages={6007--6020},
  year={2022}
}

@article{geng2018frequency,
  title={Frequency-domain full-waveform inversion with non-linear descent directions},
  author={Geng, Yu and Pan, Wenyong and Innanen, Kristopher A},
  journal={Geophysical Journal International},
  volume={213},
  number={2},
  pages={739--756},
  year={2018},
  publisher={Oxford University Press}
}

@article{hu2023augmented,
  title={Augmented Physics-Informed Neural Networks (APINNs): A gating network-based soft domain decomposition methodology},
  author={Hu, Zheyuan and Jagtap, Ameya D and Karniadakis, George Em and Kawaguchi, Kenji},
  journal={Engineering Applications of Artificial Intelligence},
  volume={126},
  pages={107183},
  year={2023},
  publisher={Elsevier}
}

@article{lutjens2022multiscale,
  title={Multiscale neural operator: Learning fast and grid-independent pde solvers},
  author={L{\"u}tjens, Bj{\"o}rn and Crawford, Catherine H and Watson, Campbell D and Hill, Christopher and Newman, Dava},
  journal={arXiv preprint arXiv:2207.11417},
  year={2022}
}

@article{rahman2022u,
  title={U-no: U-shaped neural operators},
  author={Rahman, Md Ashiqur and Ross, Zachary E and Azizzadenesheli, Kamyar},
  journal={arXiv preprint arXiv:2204.11127},
  year={2022}
}

@article{shazeer2017outrageously,
  title={Outrageously large neural networks: The sparsely-gated mixture-of-experts layer},
  author={Shazeer, Noam and Mirhoseini, Azalia and Maziarz, Krzysztof and Davis, Andy and Le, Quoc and Hinton, Geoffrey and Dean, Jeff},
  journal={arXiv preprint arXiv:1701.06538},
  year={2017}
}

@article{simeoni2025dinov3,
  title={Dinov3},
  author={Sim{\'e}oni, Oriane and Vo, Huy V and Seitzer, Maximilian and Baldassarre, Federico and Oquab, Maxime and Jose, Cijo and Khalidov, Vasil and Szafraniec, Marc and Yi, Seungeun and Ramamonjisoa, Micha{\"e}l and others},
  journal={arXiv preprint arXiv:2508.10104},
  year={2025}
}

@article{virieux2009overview,
  title={An overview of full-waveform inversion in exploration geophysics},
  author={Virieux, Jean and Operto, St{\'e}phane},
  journal={Geophysics},
  volume={74},
  number={6},
  pages={WCC1--WCC26},
  year={2009},
  publisher={Society of Exploration Geophysicists}
}

@article{wu2019inversionnet,
  title={InversionNet: An efficient and accurate data-driven full waveform inversion},
  author={Wu, Yue and Lin, Youzuo},
  journal={IEEE Transactions on Computational Imaging},
  volume={6},
  pages={419--433},
  year={2019},
  publisher={IEEE}
}

@article{yang2023rapid,
  title={Rapid seismic waveform modeling and inversion with neural operators},
  author={Yang, Yan and Gao, Angela F and Azizzadenesheli, Kamyar and Clayton, Robert W and Ross, Zachary E},
  journal={IEEE Transactions on Geoscience and Remote Sensing},
  volume={61},
  pages={1--12},
  year={2023}
}

@inproceedings{zhang2019velocitygan,
  title={VelocityGAN: Subsurface velocity image estimation using conditional adversarial networks},
  author={Zhang, Zhongping and Wu, Yue and Zhou, Zheng and Lin, Youzuo},
  booktitle={2019 IEEE Winter Conference on Applications of Computer Vision (WACV)},
  pages={705--714},
  year={2019},
  organization={IEEE}
}

@article{deighan2025mnoe,
  title   = {Mixture of Neural Operator Experts for Learning Boundary Conditions and Model Selection},
  author  = {Deighan, Dwyer and Actor, Jonas A. and Patel, Ravi G. and others},
  journal = {arXiv preprint arXiv:2502.04562},
  year    = {2025}
}

@article{chen2025freqmoe,
  title={FreqMoE: Dynamic Frequency Enhancement for Neural PDE Solvers},
  author={Chen, Tianyu and Zhou, Haoyi and Li, Ying and Wang, Hao and Zhang, Zhenzhe and Zhu, Tianchen and Zhang, Shanghang and Li, Jianxin},
  journal={arXiv preprint arXiv:2505.06858},
  year={2025}
}

@article{partyka1999interpretational,
  title={Interpretational applications of spectral decomposition in reservoir characterization},
  author={Partyka, Greg and Gridley, James and Lopez, John},
  journal={The leading edge},
  volume={18},
  number={3},
  pages={353--360},
  year={1999},
  publisher={Society of Exploration Geophysicists}
}

@article{jin2021unsupervised,
  title={Unsupervised learning of full-waveform inversion: Connecting CNN and partial differential equation in a loop},
  author={Jin, Peng and Zhang, Xitong and Chen, Yinpeng and Huang, Sharon Xiaolei and Liu, Zicheng and Lin, Youzuo},
  journal={arXiv preprint arXiv:2110.07584},
  year={2021}
}

@article{lifourier,
  title={Fourier neural operator for parametric partial differential equations},
  author={Li, Zongyi and Kovachki, Nikola and Azizzadenesheli, Kamyar and Liu, Burigede and Bhattacharya, Kaushik and Stuart, Andrew and Anandkumar, Anima},
  journal={arXiv preprint arXiv:2010.08895},
  year={2020}
}

@article{li2024local,
  title={Local neural operator for solving transient partial differential equations on varied domains},
  author={Li, Hongyu and Ye, Ximeng and Jiang, Peng and Qin, Guoliang and Wang, Tiejun},
  journal={Computer Methods in Applied Mechanics and Engineering},
  volume={427},
  pages={117062},
  year={2024},
  publisher={Elsevier}
}

@article{kovachki2023neural,
  title={Neural operator: Learning maps between function spaces with applications to pdes},
  author={Kovachki, Nikola and Li, Zongyi and Liu, Burigede and Azizzadenesheli, Kamyar and Bhattacharya, Kaushik and Stuart, Andrew and Anandkumar, Anima},
  journal={Journal of Machine Learning Research},
  volume={24},
  number={89},
  pages={1--97},
  year={2023}
}

@inproceedings{bischof2022mixture,
  title={Mixture-of-experts-ensemble meta-learning for physics-informed neural networks},
  author={Bischof, Rafael and Kraus, Michael A},
  booktitle={Proceedings of 33. forum bauinformatik},
  year={2022}
}

@article{sharma2024ensemble,
  title={Ensemble and Mixture-of-Experts DeepONets For Operator Learning},
  author={Sharma, Ramansh and Shankar, Varun},
  journal={arXiv preprint arXiv:2405.11907},
  year={2024}
}

@article{wen2022ufno,
  title={U-FNO—An enhanced Fourier neural operator-based deep-learning model for multiphase flow},
  author={Wen, Gege and Li, Zongyi and Azizzadenesheli, Kamyar and Anandkumar, Anima and Benson, Sally M},
  journal={Advances in Water Resources},
  volume={163},
  pages={104180},
  year={2022},
  publisher={Elsevier}
}

@article{seidman2023variational,
  title={Variational autoencoding neural operators},
  author={Seidman, Jacob H and Kissas, Georgios and Pappas, George J and Perdikaris, Paris},
  journal={arXiv preprint arXiv:2302.10351},
  year={2023}
}

@article{viknesh2025differentiable,
  title={Differentiable Autoencoding Neural Operator for Interpretable and Integrable Latent Space Modeling},
  author={Viknesh, Siva and Arzani, Amirhossein},
  journal={arXiv preprint arXiv:2510.00233},
  year={2025}
}

@article{shi2025mesh,
  title={Mesh-Informed Neural Operator: A Transformer Generative Approach},
  author={Shi, Yaozhong and Ross, Zachary E and Asimaki, Domniki and Azizzadenesheli, Kamyar},
  journal={arXiv preprint arXiv:2506.16656},
  year={2025}
}

@article{zou2025ambient,
  title={Ambient noise full waveform inversion with neural operators},
  author={Zou, Caifeng and Ross, Zachary E and Clayton, Robert W and Lin, Fan-Chi and Azizzadenesheli, Kamyar},
  journal={Journal of Geophysical Research: Solid Earth},
  volume={130},
  number={11},
  pages={e2025JB031624},
  year={2025},
  publisher={Wiley Online Library}
}

@article{zhu2023fourier,
  title={Fourier-DeepONet: Fourier-enhanced deep operator networks for full waveform inversion with improved accuracy, generalizability, and robustness},
  author={Zhu, Min and Feng, Shihang and Lin, Youzuo and Lu, Lu},
  journal={Computer Methods in Applied Mechanics and Engineering},
  volume={416},
  pages={116300},
  year={2023},
  publisher={Elsevier}
}

@article{li2023geofno,
  title={Fourier neural operator with learned deformations for pdes on general geometries},
  author={Li, Zongyi and Huang, Daniel Zhengyu and Liu, Burigede and Anandkumar, Anima},
  journal={Journal of Machine Learning Research},
  volume={24},
  number={388},
  pages={1--26},
  year={2023}
}

@article{tiwarilatent,
  title={Latent Mamba Operator for Partial Differential Equations},
  author={Tiwari, Karn and Dutta, Niladri and Krishnan, NM and others},
  journal={arXiv preprint arXiv:2505.19105},
  year={2025}
}

@article{wang2004ssim,
  title={Image quality assessment: from error visibility to structural similarity},
  author={Wang, Zhou and Bovik, Alan C and Sheikh, Hamid R and Simoncelli, Eero P},
  journal={IEEE transactions on image processing},
  volume={13},
  number={4},
  pages={600--612},
  year={2004},
  publisher={IEEE}
}

@article{tiwari2025latent,
  title={Latent mamba operator for partial differential equations},
  author={Tiwari, Karn and Dutta, Niladri and Krishnan, NM and others},
  journal={arXiv preprint arXiv:2505.19105},
  year={2025}
}

\clearpage
\appendix

\begin{table*}[htpb]
\centering
\small
\setlength{\tabcolsep}{4pt}
\renewcommand{\arraystretch}{1.10}
\begin{adjustbox}{max width=\linewidth}
\begin{tabular}{p{3.0cm} p{8.4cm}}
\toprule
\textbf{Symbol} & \textbf{Meaning} \\
\midrule

$x\in\mathbb{R}^{N_s\times T\times N_r}$ 
& Seismic observations with $N_s$ shots, $T$ temporal samples, and $N_r$ receivers. \\

$x'\in\mathbb{R}^{T\times (N_sN_r)}$ 
& Reshaped panoramic observation. \\

$y\in\mathbb{R}^{H\times W}$ 
& Ground-truth subsurface velocity model. \\

$\hat{y}\in\mathbb{R}^{H\times W}$ 
& Predicted subsurface velocity model. \\

$E_\theta$ 
& Spectral-Preserving DINO Encoder. \\

$z\in\mathbb{R}^{C\times H\times W}$ 
& Spatially aligned latent representation output by the encoder. \\

$R$ 
& Linear readout operator projecting latent features to a comparable spatial field. \\

$u_c\in\mathbb{R}^{H\times W}$ 
& Comparable spatial field used for spectral analysis and theoretical derivations. \\

$F$ 
& Downstream prediction operator (e.g., FNO). \\

$\hat{y}_c=F(u_c)$ 
& Downstream prediction expressed in the theoretical analysis. \\

\midrule
$\omega=(i,j)$ 
& Discrete frequency index in the centered 2D Fourier domain, where $(i,j)$ denotes the frequency coordinate after zero-frequency shifting. \\

$\mathcal{U}(\cdot)$ 
& Centered 2D discrete Fourier transform (DFT) operator. \\

$\widehat{u}=\mathcal{U}(u)\in\mathbb{C}^{H\times W}$ 
& Centered spectral representation of a spatial field $u$. \\

$P_u(\omega)$ 
& Power spectrum at frequency index $\omega$, defined as $|\widehat{u}(\omega)|^2$. \\

$r(\omega)=r(i,j)\in[0,1]$ 
& Normalized radial frequency associated with frequency index $\omega=(i,j)$. \\

$r_{\text{split}}$ 
& Threshold separating low- and high-frequency regions. \\

$\Omega_L$ 
& Low-frequency index set $\{\omega \mid r(\omega)<r_{\text{split}}\}$. \\

$\Omega_H$ 
& High-frequency index set $\{\omega \mid r(\omega)\ge r_{\text{split}}\}$. \\

$E_L(u)$ 
& Low-frequency spectral energy of field $u$. \\

$E_H(u)$ 
& High-frequency spectral energy of field $u$. \\

$\mathrm{HL}(u)$ 
& High-to-low frequency energy ratio of $u$. \\

\midrule
$K$ 
& Number of concentric soft frequency bands. \\

$c_k$ 
& Normalized center of the $k$-th frequency band. \\

$M_k(i,j)$ 
& Gaussian soft mask applied to the $k$-th frequency band at location $(i,j)$. \\

$\gamma$ 
& Band sharpness parameter. \\

$z_k\in\mathbb{R}^{C\times H\times W}$ 
& Band-wise latent feature reconstructed from the $k$-th frequency band. \\

$f_e\in[0,1]$ 
& Learnable frequency-preference parameter of expert $e$. \\

$\eta$ 
& Frequency-affinity sharpness. \\

$\pi_{e,k}$ 
& Mixing weight of frequency band $k$ for expert $e$. \\

$\tilde{z}_e$ 
& Input feature to expert $e$ after adaptive frequency-preference mixing. \\

$g\in\mathbb{R}^{N_E}$ 
& Router logits. \\

$\mathcal{T}(x)$ 
& Index set of Top-$E$ experts selected by the router for sample $x$. \\

$\alpha_e(x)$ 
& Normalized gating weight of expert $e$ for sample $x$. \\

$\mathcal{O}_e$ 
& Operator implemented by expert $e$. \\

\midrule
$\delta$ 
& High-frequency non-contractiveness constant. \\

$\kappa$ 
& Low-frequency controllability constant. \\

$m$ 
& Lower bound of downstream operator amplification. \\

$M$ 
& Upper bound of downstream operator amplification. \\

$I$ 
& Interpolation operator used as a baseline frontend. \\

$H_I(\omega)$ 
& Frequency response of interpolation operator $I$. \\

$\alpha$ 
& Upper bound of $|H_I(\omega)|$ over the high-frequency region $\Omega_H$. \\

$\beta$ 
& Lower bound of $|H_I(\omega)|$ over the low-frequency region $\Omega_L$. \\

\bottomrule
\end{tabular}
\end{adjustbox}
\caption{Comprehensive notation list for the main method and theoretical analysis.}
\label{tab:notation_com}
\end{table*}

\section{Table of Notions}
\label{app:notation_compact}
Table~\ref{tab:notation_com} provides a comprehensive list of notations used in the main method and theoretical analysis.

\section{Theoretical Proofs}
\label{supply:proofs}
 
This supplementary material provides the formal proofs for the spectral properties of interpolation and the Spectral-Preserving DINO Encoder presented in the main text.
 
\subsection{Proof of Theorem~\ref{thm:interp_lowpass}}
\label{supply:proof_interp}

This subsection provides the detailed proof of Theorem~\ref{thm:interp_lowpass}, demonstrating that the spatial interpolation operator imposes an upper bound on the high-to-low frequency energy ratio (HL) when high-frequency components are attenuated more strongly than low-frequency components.
 
\begin{proof}
By the definition of the power spectrum and the multiplicative frequency-domain response assumption in Eq.~\eqref{eq:response}, we have:
\begin{align}
    P_{I(u)}(\omega)
    &= |\widehat{I(u)}(\omega)|^2 \nonumber \\
    &= |H_I(\omega)\widehat u(\omega)|^2 \nonumber \\
    &= |H_I(\omega)|^2\,P_u(\omega).
\end{align}
Therefore, the high-frequency energy satisfies:
\begin{align}
    E_H(I(u))
    &= \sum_{\omega\in\Omega_H}P_{I(u)}(\omega) \nonumber \\
    &= \sum_{\omega\in\Omega_H}|H_I(\omega)|^2P_u(\omega) \nonumber \\
    &\le \alpha^2\sum_{\omega\in\Omega_H}P_u(\omega) \nonumber \\
    &= \alpha^2E_H(u).
\end{align}
Similarly, the low-frequency energy satisfies:
\begin{align}
    E_L(I(u))
    &= \sum_{\omega\in\Omega_L}|H_I(\omega)|^2P_u(\omega) \nonumber \\
    &\ge \beta^2\sum_{\omega\in\Omega_L}P_u(\omega) \nonumber \\
    &= \beta^2E_L(u).
\end{align}
Substituting these bounds into the definition of the HL ratio yields:
\begin{align}
    \mathrm{HL}(I(u)) 
    &= \frac{E_H(I(u))}{E_L(I(u))+\varepsilon} \nonumber \\
    &\le \frac{\alpha^2E_H(u)}{\beta^2E_L(u)+\varepsilon} \nonumber \\
    &\le \frac{\alpha^2}{\beta^2}\cdot\frac{E_H(u)}{E_L(u)+\varepsilon} \nonumber \\
    &= \frac{\alpha^2}{\beta^2}\mathrm{HL}(u),
\end{align}
where the second inequality follows from $\beta\le 1$. Since $0<\alpha<\beta$, we have $\alpha^2/\beta^2<1$, indicating that the HL ratio is strictly reduced. This proves the upper bound constraint imposed by the interpolation operator.
\end{proof}
 
\subsection{Proof of Theorem~\ref{main:thm}}
\label{supply:proof_dino}

This subsection presents the formal proof of Theorem~\ref{main:thm}, establishing the spectral preservation property of the proposed Spectral-Preserving DINO Encoder.
 
\begin{proof}
From assumption A3~\eqref{eq:bounded_resp} regarding the boundedness of the downstream operator, the high-frequency energy of the final prediction satisfies:
\begin{equation}
    E_H(\hat y_c) = E_H(F(u_c)) \ge mE_H(u_c).
\end{equation}
Combining this with the high-frequency non-contraction assumption A1~\eqref{eq:hf_noncontract}, we obtain:
\begin{equation}
    E_H(\hat y_c) \ge m\cdot\delta\,E_H(y).
\end{equation}
Similarly, from assumption A3~\eqref{eq:bounded_resp}, the low-frequency energy satisfies:
\begin{equation}
    E_L(\hat y_c) = E_L(F(u_c)) \le ME_L(u_c).
\end{equation}
Substituting these bounds into the definition of the HL ratio yields:
\begin{align}
    \mathrm{HL}(\hat y_c) 
    &= \frac{E_H(\hat y_c)}{E_L(\hat y_c)+\varepsilon} \nonumber \\
    &\ge \frac{m\cdot\delta\,E_H(y)}{ME_L(u_c)+\varepsilon} \nonumber \\
    &= \frac{m}{M}\cdot\delta\cdot
    \frac{E_H(y)}{E_L(u_c)+\varepsilon/M}.
\end{align}
Since $E_L(u_c)+\varepsilon \ge E_L(u_c)+\varepsilon/M$ due to $M \ge 1$, it follows that:
\begin{equation}
    \mathrm{HL}(\hat y_c)
    \ge
    \frac{m}{M}\cdot\delta\cdot
    \frac{E_H(y)}{E_L(u_c)+\varepsilon}.
\end{equation}
Using the controllable low-frequency assumption A2~\eqref{eq:lf_control}, we further obtain:
\begin{align}
    \mathrm{HL}(\hat y_c) 
    &\ge
    \frac{m}{M}\cdot\delta\cdot
    \frac{E_H(y)}{\kappa E_L(y)+\varepsilon} \nonumber \\
    &=
    \frac{m}{M}\cdot\frac{\delta}{\kappa}\cdot
    \frac{E_H(y)}{E_L(y)+\varepsilon/\kappa}.
\end{align}
Since $\kappa \ge 1$, we have $\varepsilon \ge \varepsilon/\kappa$, which ultimately leads to:
\begin{equation}
    \mathrm{HL}(\hat y_c)
    \ge
    \frac{m}{M}\cdot\frac{\delta}{\kappa}\cdot
    \mathrm{HL}(y).
\end{equation}
This completes the proof.
\end{proof}

\subsection{Detailed Empirical Validation}
\label{supply:valid}

Table~\ref{tab:spectral-analysis} reports empirical statistics for verifying assumptions A1--A3~\eqref{eq:hf_noncontract}--\eqref{eq:bounded_resp}. All results are obtained using the same downstream operator $F$ (FNO), the same test set (CurveVel-A), and an identical frequency-domain decomposition, ensuring a controlled comparison between the Spectral-Preserving DINO Encoder and the bilinear interpolation frontend.

\subsubsection{Metric Definitions}

\noindent\textbf{High-frequency preservation ratio ($\mathrm{Ratio_1}$).}
To evaluate assumption A1~\eqref{eq:hf_noncontract}, we define
\begin{equation}
\label{eq:ratio1}
\mathrm{Ratio_1}
=
\frac{E_H(u_c)}{E_H(y)},
\end{equation}
which directly measures whether the encoder output preserves or contracts high-frequency energy relative to the ground truth.

\noindent\textbf{Low-frequency controllability ratio ($\mathrm{Ratio_2}$).}
To evaluate assumption A2~\eqref{eq:lf_control}, we define
\begin{equation}
\label{eq:ratio2}
\mathrm{Ratio_2}
=
\frac{E_L(u_c)}{E_L(y)}.
\end{equation}
This ratio quantifies the extent to which the low-frequency energy of the encoder output is bounded by that of the ground truth.

\noindent\textbf{Downstream frequency-band gains ($g_H$, $g_L$).}
To evaluate assumption A3~\eqref{eq:bounded_resp}, we define the empirical gain factors of the downstream operator $F$ on the high- and low-frequency bands as
\begin{equation}
\label{eq:gains}
g_H = \frac{E_H(\hat y)}{E_H(u_c)}, \qquad
g_L = \frac{E_L(\hat y)}{E_L(u_c)}.
\end{equation}
Assumption A3~\eqref{eq:bounded_resp} requires the existence of constants $0 < m \le M < \infty$ such that both $g_H$ and $g_L$ lie within the interval $[m, M]$.

\subsubsection{Empirical Observations}

The Spectral-Preserving DINO Encoder achieves a mean value of $\mathrm{Ratio}_1 = 19.032$, which is significantly larger than $1$, providing empirical support for the high-frequency non-contractiveness assumption A1~\eqref{eq:hf_noncontract}. It also yields a mean value of $\mathrm{Ratio}_2 = 34.904$, indicating that the low-frequency energy of the encoder output remains bounded by a finite multiple of the ground truth, thus supporting assumption A2~\eqref{eq:lf_control}.

For the downstream operator, the empirical ranges of $g_H$ and $g_L$ under the encoder frontend are of comparable orders of magnitude and exhibit substantial overlap, supporting the existence of finite constants $m$ and $M$ in assumption A3~\eqref{eq:bounded_resp}. In contrast, the interpolation frontend yields $\mathrm{Ratio}_1 = 3.876 \times 10^{-6}$ and exhibits a strong imbalance between $g_H$ and $g_L$, indicating severe high-frequency collapse and low-frequency-biased amplification.

\begin{table*}[!t]
\centering
\begin{adjustbox}{max width=\textwidth}
\begin{tabular}{llcccccccccc}
\toprule
\multirow{2}{*}{\textbf{Category}} & \multirow{2}{*}{\textbf{Hyperparameter}} & \multicolumn{2}{c}{\textbf{FlatVel}} & \multicolumn{2}{c}{\textbf{CurveVel}} & \multicolumn{2}{c}{\textbf{FlatFault}} & \multicolumn{2}{c}{\textbf{CurveFault}} & \multicolumn{2}{c}{\textbf{Style}} \\
\cmidrule(lr){3-4} \cmidrule(lr){5-6} \cmidrule(lr){7-8} \cmidrule(lr){9-10} \cmidrule(lr){11-12}
& & \textbf{A} & \textbf{B} & \textbf{A} & \textbf{B} & \textbf{A} & \textbf{B} & \textbf{A} & \textbf{B} & \textbf{A} & \textbf{B} \\
\midrule
\multirow{7}{*}{\shortstack[l]{Optimization\\(AdamW)}} 
& Initial LR (LR, $\times 10^{-3}$) & 0.1 & 0.1 & 0.1 & 0.1 & 0.1 & 0.1 & 0.1 & 0.1 & 0.1 & 0.1 \\
& Weight Decay & 0.05 & 1e-4 & 1e-4 & 0.05 & 1e-4 & 0.05 & 1e-4 & 1e-4 & 0.05 & 0.05 \\
& Batch Size & 32 & 32 & 32 & 32 & 32 & 32 & 32 & 32 & 32 & 32 \\
& Training Epochs & 160 & 160 & 160 & 160 & 160 & 160 & 160 & 160 & 160 & 160 \\
& Warmup Epochs & 5 & 5 & 5 & 5 & 5 & 5 & 5 & 5 & 5 & 5 \\
& $T_0$ / $T_{mult}$ & 10 / 2 & 10 / 2 & 10 / 2 & 10 / 2 & 10 / 2 & 10 / 2 & 10 / 2 & 10 / 2 & 10 / 2 & 10 / 2 \\
& Scheduler $\gamma$ & 0.3 & 0.3 & 0.3 & 0.3 & 0.3 & 0.3 & 0.3 & 0.3 & 0.3 & 0.3 \\
\midrule
\multirow{5}{*}{\shortstack[l]{Loss\\Weights}} 
& Grad L1 ($\lambda_{grad}$) & 0.15 & 0.15 & 0.15 & 0.15 & 0.15 & 0.15 & 0.15 & 0.15 & 0.15 & 0.15 \\
& Fourier Mag L1 ($\lambda_{freq}$) & 0.10 & 0.10 & 0.10 & 0.10 & 0.10 & 0.10 & 0.10 & 0.10 & 0.10 & 0.10 \\
& Load Balance ($\lambda_{ce}$) & 0.20 & 0.20 & 0.20 & 0.20 & 0.20 & 0.20 & 0.20 & 0.20 & 0.20 & 0.20 \\
& Router L1 ($\lambda_{l1}$) & 0.60 & 0.60 & 0.60 & 0.60 & 0.60 & 0.60 & 0.60 & 0.60 & 0.60 & 0.60 \\
& Router L2 ($\lambda_{l2}$) & 0.40 & 0.40 & 0.40 & 0.40 & 0.40 & 0.40 & 0.40 & 0.40 & 0.40 & 0.40 \\
\midrule
\multirow{5}{*}{\shortstack[l]{Architecture\\(SPAMoE)}} 
& Hidden / Enc Channels & 96/128 & 128/128 & 128/128 & 96/128 & 128/128 & 96/128 & 128/128 & 128/128 & 96/128 & 96/128 \\
& Top-$k$ Experts & 2 & 2 & 2 & 2 & 2 & 2 & 2 & 2 & 2 & 2 \\
& Band Sharpness ($\gamma$) & 20.0 & 20.0 & 20.0 & 20.0 & 20.0 & 20.0 & 20.0 & 20.0 & 20.0 & 20.0 \\
& Freq. Affinity ($\eta$) & 10.0 & 10.0 & 10.0 & 10.0 & 10.0 & 10.0 & 10.0 & 10.0 & 10.0 & 10.0 \\
& Backbone & \multicolumn{10}{c}{ViT (DINOv3)} \\
\midrule
\multirow{7}{*}{\shortstack[l]{Expert\\Specs}} 
& FNO Modes $(H, W)$ & (16, 16) & (16, 16) & (16, 16) & (16, 16) & (16, 16) & (16, 16) & (16, 16) & (16, 16) & (16, 16) & (16, 16) \\
& FNO Layers & 4 & 4 & 4 & 4 & 4 & 4 & 4 & 4 & 4 & 4 \\
& MNO Scales & 3 & 3 & 3 & 3 & 3 & 3 & 3 & 3 & 3 & 3 \\
& MNO Scale Factors & \multicolumn{10}{c}{1.0, 0.6, 0.3} \\
& MNO Layers & 3 & 3 & 3 & 3 & 3 & 3 & 3 & 3 & 3 & 3 \\
& LNO Modes $(H, W)$ & (16, 16) & (16, 16) & (16, 16) & (16, 16) & (16, 16) & (16, 16) & (16, 16) & (16, 16) & (16, 16) & (16, 16) \\
& LNO Layers & 3 & 3 & 3 & 3 & 3 & 3 & 3 & 3 & 3 & 3 \\
\midrule
\multirow{2}{*}{Data Spec} 
& Input Size ($T \times R$) & \multicolumn{10}{c}{$1k \times 350$} \\
& Output Size ($H \times W$) & \multicolumn{10}{c}{$70 \times 70$} \\
\bottomrule
\end{tabular}
\end{adjustbox}
\caption{Updated hyperparameter configurations for SPAMoE across 10 OpenFWI sub-datasets, incorporating the ViT backbone and latest training settings.}
\label{tab:hyperparams}
\end{table*}

\begin{table*}[t]
\centering
\begin{adjustbox}{max width=\textwidth}
\begin{tabular}{llcccccccccc}
\toprule
\multirow{2}{*}{\textbf{Category}} & \multirow{2}{*}{\textbf{Hyperparameter}} & \multicolumn{2}{c}{\textbf{FlatVel}} & \multicolumn{2}{c}{\textbf{CurveVel}} & \multicolumn{2}{c}{\textbf{FlatFault}} & \multicolumn{2}{c}{\textbf{CurveFault}} & \multicolumn{2}{c}{\textbf{Style}} \\
\cmidrule(lr){3-4} \cmidrule(lr){5-6} \cmidrule(lr){7-8} \cmidrule(lr){9-10} \cmidrule(lr){11-12}
& & \textbf{A} & \textbf{B} & \textbf{A} & \textbf{B} & \textbf{A} & \textbf{B} & \textbf{A} & \textbf{B} & \textbf{A} & \textbf{B} \\
\midrule
\multirow{3}{*}{\shortstack[l]{Optimization\\(AdamW)}} 
& Initial LR (LR, $\times 10^{-3}$) & 0.1 & 0.1 & 0.1 & 0.1 & 0.1 & 0.1 & 0.1 & 0.1 & 0.1 & 0.1 \\
& Weight Decay & 0.05 & 0.05 & 0.05 & 1e-4 & 0.05 & 0.05 & 0.05 & 0.05 & 0.05 & 0.05 \\
& Batch Size / Epochs & 32 / 200 & 32 / 200 & 32 / 200 & 32 / 150 & 32 / 200 & 32 / 200 & 32 / 200 & 32 / 200 & 32 / 200 & 32 / 200 \\
\midrule
\multirow{3}{*}{\shortstack[l]{Architecture\\(Baseline)}} 
& Hidden Channels ($C$) & 64 & 64 & 64 & 64 & 64 & 64 & 64 & 64 & 64 & 64 \\
& FNO Modes $(H, W)$ & (16, 16) & (16, 16) & (16, 16) & (16, 16) & (16, 16) & (16, 16) & (16, 16) & (16, 16) & (16, 16) & (16, 16) \\
& FNO Layers & 8 & 8 & 8 & 8 & 8 & 8 & 8 & 8 & 8 & 8 \\
\midrule
\multirow{1}{*}{Data Spec} 
& Input / Output Resolution & \multicolumn{10}{c}{$1000 \times 350 \rightarrow 70 \times 70$} \\
\bottomrule
\end{tabular}%
\end{adjustbox}
\caption{Comprehensive hyperparameter configurations for SPAMoE across 10 OpenFWI sub-datasets. This table covers optimization settings, loss weight distributions, core model architecture, and expert-specific parameters. }
\label{tab:baseline_params}
\end{table*}


\section{Implementation Details}
This section presents a comprehensive overview of the experimental setup, covering benchmark datasets, evaluation metrics,
and implementation details to ensure a rigorous and reproducible analysis.

\subsection{Training Detail}

\label{sec:training-details}

We train one independent model per OpenFWI 2D sub-dataset (CurveVel-A/B, FlatVel-A/B, CurveFault-A/B, FlatFault-A/B, and Style-A/B). All models share the same architecture and loss design, while optimization hyperparameters follow a unified configuration (Table~\ref{tab:hyperparams}). Training is conducted on dual RTX~4090 GPUs with a per-GPU batch size of 32. Unless otherwise stated, all settings below apply to all sub-datasets.

We report three standard image-level reconstruction metrics on the OpenFWI test sets: mean absolute error (MAE), root mean squared error (RMSE), and peak signal-to-noise ratio (PSNR). MAE and RMSE are computed between the predicted and ground-truth velocity maps (lower is better), while PSNR measures reconstruction fidelity in decibels (higher is better).

\subsection{Hyperparameters Details}
\label{supply:hyper}

For completeness and reproducibility, Table~\ref{tab:hyperparams} provides detailed hyperparameter configurations for SPAMoE across the ten OpenFWI sub-datasets, covering optimization settings, loss weights, model architecture, and expert-specific parameters. Table~\ref{tab:baseline_params} lists the hyperparameter configurations of the baseline (Only FNO) model. 


\subsection{Evaluation Metric}
For the FWI task, we strictly follow the evaluation protocol of the OpenFWI benchmark. Specifically, we adopt exactly the same metric definitions and implementation code as provided in the OpenFWI open-source repository, ensuring fair and directly comparable evaluations. The employed metrics include Mean Absolute Error (MAE), Root Mean Squared Error (RMSE), and Structural Similarity Index (SSIM)~\cite{wang2004ssim}.

For the pipe-flow task, we use the relative $\ell_2$ error, consistent with the evaluation setting of LaMO~\cite{tiwarilatent}, enabling a fair comparison with prior work.

\section{Visualization}
\subsection{Additional Main Result}
\begin{figure*}[t]
    \centering
    \includegraphics[width=\linewidth]{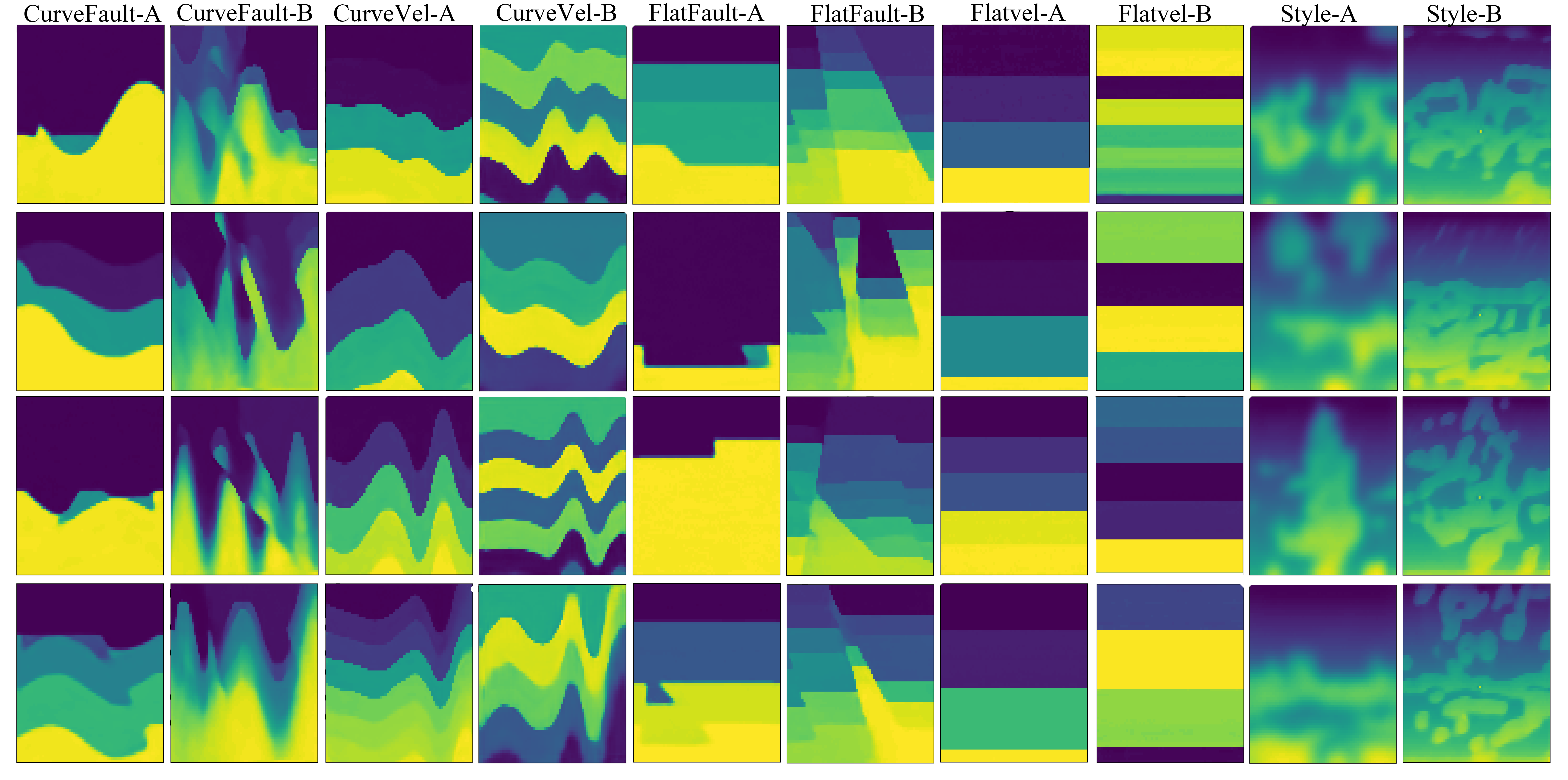}
    \caption{ Additional qualitative results of our framework on OpenFWI. Each column corresponds to one of the ten OpenFWI sub-datasets, visualizing reconstructed velocity models not shown in the main paper. }
    \label{fig:mainresult_supply}
\end{figure*}

To complement the qualitative results presented in the main paper, we provide additional reconstructed velocity models produced by our framework on OpenFWI in Figure~\ref{fig:mainresult_supply}. These examples are included for completeness and are not shown in the main text due to space limitations. 

For each sub-dataset, we select four representative velocity models predicted by our method for qualitative visualization. As shown, the proposed framework can stably reconstruct a variety of typical geological structures, including smooth varying background layers, curved stratified formations, and clear fault interfaces. For regions with highly mixed structural components and strong spatial variability, a small number of examples exhibit locally smoother boundary transitions or reduced fine-scale contrast, reflecting the inherent difficulty of disentangling multi-scale features in such complex scenarios.

\subsection{Visualization of Intermediate Representations}
\begin{figure*}[t]
    \centering
    \includegraphics[width=\linewidth]{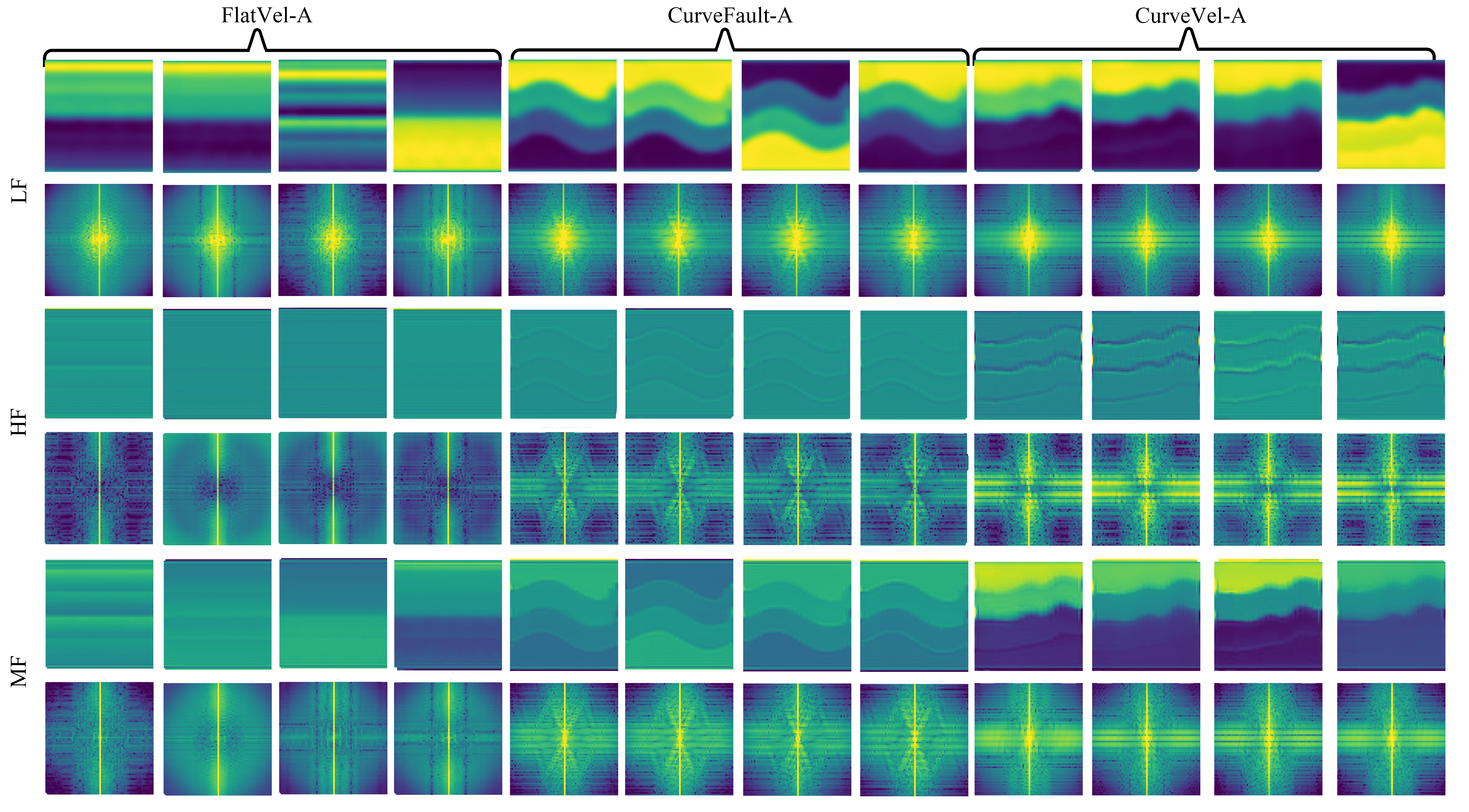}
    \caption{ Total 12 columns show 4 samples per sub-dataset (from left to right: FlatVel-A, CurveFault-A, CurveVel-A). Rows from top to bottom visualize the low-, high-, and medium-frequency components partitioned by the Encoder, paired with their 2D amplitude spectrums. This demonstrates the model's ability to decouple physical features before MoE expert selection. }
    \label{fig:xiaorong}
\end{figure*}

To verify the effectiveness of the spectral partitioning module, we analyze the intermediate features captured prior to the routing stage (Figure~\ref{fig:xiaorong}). The visualization confirms that the module successfully decouples the input into low-, high-, and medium-frequency paths, thereby providing appropriate inputs for the subsequent MoE experts.

Inspecting the decoupled low-, high-, and mid-frequency visualizations and their corresponding spectra, we observe that the low-frequency components predominantly capture large-scale background variations and smooth stratified trends, which are responsible for the global velocity distribution and long-wavelength structures. In contrast, high-frequency components emphasize sharp discontinuities, fine-scale layer boundaries, and fault-related features, exhibiting concentrated energy in the outer spectral regions. The mid-frequency components mainly represent transitional structures between these two components, such as moderately varying layers and curved interfaces, bridging global context and local details. Consequently, this spectral partitioning provides structurally and spectrally complementary representations, enabling each expert to focus on the frequency band most relevant to its modeling capacity.


\end{document}